\documentclass{article}
\usepackage{silence}
\usepackage[subtle]{savetrees}
 \usepackage[preprint]{neurips_2026}

\usepackage[utf8]{inputenc}
\usepackage[T1]{fontenc}
\usepackage{url}
\usepackage{microtype}
\usepackage[dvipsnames]{xcolor}
\usepackage{comment}
\usepackage[title]{appendix}

\usepackage{graphicx}
\usepackage{wrapfig}
\usepackage{subcaption}
\usepackage{float}
\usepackage[font=small,labelfont=bf]{caption}
\usepackage[normalem]{ulem}
\usepackage{placeins}
\usepackage{booktabs}
\usepackage{multicol}
\usepackage{multirow}
\usepackage{tablefootnote}
\usepackage{enumitem}
\usepackage{chngcntr}   % appendix figure numbering

\usepackage{amsmath}
\usepackage{amsfonts}
\usepackage{amssymb}
\usepackage{mathtools}
\usepackage{mathrsfs}
\usepackage{nicefrac}
\usepackage{amsthm}
\usepackage{thmtools}
\usepackage{thm-restate}
\usepackage{algpseudocode}
\usepackage{array, makecell}
\usepackage{listings}
\usepackage{amsmath,amsfonts,bm}

\def\eqref#1{equation~\ref{#1}}
\def\Eqref#1{Equation~\ref{#1}}
\def\1{\bm{1}}

\DeclareMathAlphabet{\mathsfit}{\encodingdefault}{\sfdefault}{m}{sl}
\SetMathAlphabet{\mathsfit}{bold}{\encodingdefault}{\sfdefault}{bx}{n}

\newcommand{\KL}{D_{\mathrm{KL}}}

\declaretheorem{theorem}

\declaretheorem[numberwithin=theorem]{lemma}
\declaretheorem[numberwithin=theorem]{corollary}

\declaretheorem[style=definition]{definition}

\usepackage{hyperref}
\usepackage[nameinlink,capitalize,noabbrev]{cleveref}
\definecolor{mydarkblue}{rgb}{0,0.08,0.45}
\hypersetup{colorlinks=true,
    linkcolor=mydarkblue,
    citecolor=mydarkblue,
    filecolor=mydarkblue,
    urlcolor=mydarkblue
}

\usepackage[most]{tcolorbox}
\newtcolorbox[auto counter]{mybox}[2][]
{
  center,
  breakable,
  width = \linewidth,
  colframe = gray!25,
  colback  = gray!10,
  coltitle = black,  
  title    = \textbf{Box \thetcbcounter. #2},
  #1,
}

\usepackage[textsize=tiny]{todonotes}

\title{Can In-Context Learning Support Intrinsic Curiosity?}

\author{
    Eric Elmoznino${}^{*,1}$, 
    Sangnie Bhardwaj${}^{*,2}$,
    Johannes von Oswald${}^{1}$,
    Rajai Nasser${}^{1}$,\\
    \textbf{Blaise Agüera y Arcas${}^{1}$},
    \textbf{João Sacramento${}^{1}$},
    \textbf{Rif A. Saurous${}^{1}$},
    \textbf{Guillaume Lajoie${}^{1}$} \\\\
    ${}^1$Google -- Paradigms of Intelligence Team,
    ${}^2$Google DeepMind
    \vphantom{
    \thanks{Equal contribution. Correspondence to: \texttt{\{eric.elmoznino,guillaume.lajoie\}@gmail.com}, \texttt{sangnie@google.com}}
    }
}

\DeclareMathOperator{\Ent}{H}
\DeclareMathOperator{\pmi}{pmi}
\newcommand{\rone}{r^{\mathrm{one}}}
\newcommand{\rsum}{r^{\mathrm{sum}}}
\newcommand{\rsur}{r^{\mathrm{sur}}}
\newcommand{\rtask}{r^{\mathrm{task}}}
\newcommand{\rdl}{r^{\mathrm{dl}}}

\newcommand{\calF}{\mathcal{F}}
\newcommand{\rgap}{r^{\mathrm{gap}}}

\begin{document} 
\maketitle

\vspace{-2em}
\begin{abstract}
    Effective machine learning depends not only on how we model data, but also on what data we choose to collect. While large sequence models have revolutionized data modeling, the problem of automated data selection, or ``intrinsic curiosity'', remains a significant challenge. Classic approaches incentivize exploration by rewarding an agent based on its ``learning progress'', which measures how much a newly acquired observation improves a world model's predictive ability. However, evaluating these rewards traditionally requires expensive inner loops of gradient descent updates within each trajectory, rendering them computationally impractical at scale. In this work, we investigate whether the emergent in-context learning (ICL) capabilities of sequence models can eliminate this bottleneck by serving as immediate, update-free world models. Specifically, we evaluate whether an exploration policy can be trained to maximize learning progress, using solely the prediction errors and counterfactual context manipulations of an in-context learner. We first prove that in general Markov decision processes, this is in fact impossible in an unbiased way: the resulting intrinsic rewards either suffer from nuisance terms that bias their estimation of true learning progress, or they cannot be implemented using an in-context learner's prediction errors. Conversely, we prove a positive result for a broad subclass of non-temporal settings, encompassing active learning and Bayesian Experimental Design: here, ICL-derived rewards successfully bound and asymptotically converge to the true learning progress. We corroborate our theory with controlled experiments across continuous and symbolic environments, demonstrating that our ICL-driven framework successfully trains curious data-collection policies that explore optimally.
\end{abstract}

\section{Introduction}

How should an agent collect data in an unknown environment to ensure optimal exploration? One foundational approach dictates that the data should maximally improve the agent's model of its environment \citep{schmidhuber1991curious,lindley1956measure}. Such a model can subsequently support arbitrary downstream tasks, either through explicit planning or by having acquired robust representations. To this end, prior work has proposed utilizing ``intrinsic rewards'' to drive the collection of data, which are defined strictly as a function of the trajectory of actions and observations collected by a policy, and assume no extrinsic ``task''.

Given a Bayesian prior on the world, an optimal intrinsic reward is Bayesian information gain (BIG), which measures the expected bits gained about the true environment's dynamics with every action \citep{itti2009bayesian,lindley1956measure}. However, BIG is difficult to compute in practice because it requires explicitly parameterizing the environment's dynamics and performing intractable Bayesian inference. While tractable prediction-based objectives have been proposed as alternatives \citep[e.g.,][]{schmidhuber1991curious}, they face two major hurdles: (1) their theoretical relationship to BIG remains poorly understood and (2) computing them requires expensive gradient descent updates to a world model that have slow credit assignment.

Overcoming this second bottleneck requires fast and data-efficient learning mechanisms. In-context learning (ICL) has emerged as a highly effective paradigm for addressing this problem: sequence models can act as amortized predictors that \emph{implicitly} approximate the Bayesian posterior predictive in a single forward pass, bypassing the need for explicit inference. This capability is hypothesized to partly drive the success of large language models \citep{radford2019language, brown2020language}, and has seen great success with Prior-Fitted Networks (PFNs) \citep{nagler2023statistical,muller2022transformers} such as TabPFN \citep{hollmann2022tabpfn} which are pretrained on a large prior over datasets to perform amortized learning on a new dataset at inference time.

In this work, we use ICL for evaluating prediction-based intrinsic curiosity rewards. We examine the degree to which data collected by a policy improves a world model learned purely in-context, and reward the policy based on this improvement. This raises several fundamental questions: First, is this possible with ICL, and for which intrinsic rewards? Second, does the resulting reward approximate BIG, and if so, under what circumstances? To answer these questions, we make the following contributions:
\begin{enumerate}[leftmargin=*, topsep=0pt, itemsep=-0.3pt]
    \item By formalizing in-context learners as implicit Bayesian predictors, we prove novel mathematical relationships between prediction-based intrinsic rewards and BIG, which were previously unknown.
    \item We provide a negative result for general Markov Decision Processes (MDPs), proving that prediction-based intrinsic rewards are biased estimators of BIG. Further, while some of these rewards can be implemented using ICL, we show that others, such as classic learning progress \citep{schmidhuber1991curious}, cannot.
    \item In contrast, we establish a positive result for Bayesian Experimental Design (BED) settings, demonstrating that several intrinsic rewards can be approximated using ICL, and that it is possible to  asymptotically approximate BIG for long trajectories with a particular reward structure.
    \item We conduct experiments that corroborate our theoretical findings, demonstrating the practical viability of this ICL approach to curiosity.
\end{enumerate}

\vspace{-0.5em}
\section{Problem Setting and Notation}
\label{sec:setting}

\paragraph{Environment.}

We consider a Bayes-Adaptive MDP (BAMDP) with states $s_t$, actions $a_t$, horizon $T$, and a latent environment parameter $\theta \sim p(\theta)$ fixed throughout an episode. Conditional on $\theta$, the environment's dynamics are Markov and stationary: $s_{t+1} \sim p(\cdot \mid s_t, a_t, \theta)$.  Throughout, we write $h_t \coloneqq (s_1, a_1, \ldots, s_{t-1}, a_{t-1})$ for the trajectory history just before $s_t$.

\paragraph{Intrinsic curiosity objective.}

Our goal is to train a policy $\pi_\phi(a_t \mid h_t, s_t)$ under a ``meta-RL'' setting \citep{duan2016rl} in which episodes are rolled out from environments sampled from $p(\theta)$. Crucially, there is no notion of ``task'' or ``extrinsic'' reward. We instead consider ``intrinsic'' rewards that are functions of only the trajectory $(s_1, a_1, \ldots, s_T)$, and which are ideally maximized when the trajectory is highly informative about the latent parameters $\theta$ governing the environment's dynamics. By training $\pi_\phi$ across a broad distribution of environments sampled from $p(\theta)$, we aim for it to optimally explore a \emph{new} environment sampled from this distribution at inference time. In information theoretic terms, we consider an ``optimal'' intrinsic reward as one that maximizes the per-step mutual information between an observed state and the latent environment parameters, conditioned on the history. In the literature, this is called Bayesian information gain (BIG) or Bayesian surprise \citep{itti2009bayesian,lindley1956measure}:
\begin{align}
    \label{eq:big}
    \text{BIG} := I(s_t ; \theta \mid h_t) .
\end{align}

\paragraph{In-context learning prediction errors.}

We consider whether a sequence model $\rho$ can be used to approximate BIG, using only its prediction errors and manipulations of its context. We assume that $\rho$ has been pretrained offline to perform next-state predictions on trajectories of BAMDPs sampled from prior $p(\theta)$ with uniform action selection --- $\rho$ is then frozen when training the policy. We further suppose that $\rho$ has been trained with sufficient capacity and data, such that it has learned to emit the Bayesian posterior-predictive:
\begin{align}
    \rho(s_{t} \mid h_{t}) = \int p(s_{t} \mid s_{t-1}, a_{t-1}, \theta)\, p(\theta \mid h_{t})\, d\theta .
\end{align}
This perspective of in-context learners as amortized Bayes-optimal predictors has been formalized in substantial prior work \citep{graumoya2024learning,mikulik2020meta,xie2022an,binz2024meta,nagler2023statistical,muller2022transformers}, and it serves as a starting point for our investigations. In our setting, $\rho$ can be seen as a meta-learner that \emph{implicitly} fits environment parameters $\theta$ in-context and exposes their predictions for novel transitions, all within a single forward pass. The speed of in-context learners makes them particularly promising for the purposes of optimizing certain intrinsic curiosity objectives, which, as we will see next, require us to repeatedly assess the learning progress of a world model over the course of a trajectory.

\section{Prior Work on Intrinsic Curiosity}
\label{sec:prior_work}

We briefly describe approaches to intrinsic curiosity relevant to our work --- \citet{aubret2023information} provides a recent review. We discuss intrinsic rewards that are less directly related to our work in \cref{app:prior_work}. We assume $p(\cdot \mid \cdot, {\hat{\theta}_t})$ is a predictive world model that has been trained on subtrajectory $(a_{1:{t-1}}, s_{1:t})$.

\paragraph{Surprisal.}

A common heuristic for encouraging exploration is to maximize a world model's surprisal, seeking observations for which it has high prediction error. The surprisal reward is
$r_t \coloneqq -\log \ p(s_t \mid s_{t-1}, a_{t-1}, \hat{\theta}_{t-1})$.
This approach has been used at least as far back as \citet{schmidhuber1991possibility} and is simple but surprisingly effective \citep{burda2018largescale,levy2025worldllm,hester2012intrinsically,chentanez2004intrinsically}.
% The intuition behind surprisal-based intrinsic rewards goes back to the foundations of information theory \citep{shannon1948mathematical}, which equates the negative log-likelihood of an event with its information quantity.
However, surprisal fails to distinguish between two sources of information: epistemic (i.e., learnable) and aleatoric (i.e., noise). The latter can be detrimental: a policy seeking surprisal will learn to sit in front of a ``noisy TV'', even if these observations contain no new useful information about the environment \citep{schmidhuber1991curious}. Thus, while surprisal excels in deterministic environments, it fails in stochastic settings \citep{burda2018largescale}.

\vspace{-0.5em}
\paragraph{Learning progress.}

Another principled approach to intrinsic curiosity is to maximize ``learning progress'', introduced in \citet{schmidhuber1991curious} and further explored in \citep{schmidhuber2009driven,schmidhuber2010formal,storck1995reinforcement,oudeyer2007intrinsic,oudeyer2008can,oudeyer2007what,lopes2012exploration,azar2019world}. It aims to reward a policy based on the degree to which the data that it acquires improves the predictive ability of a world model. The learning progress intrinsic reward is
$r_t \coloneqq \log \ p(s_t \mid s_{t-1}, a_{t-1}, \hat{\theta}_t) - \log \ p(s_t \mid s_{t-1}, a_{t-1}, \hat{\theta}_{t-1})$,
i.e. the improvement in the model's ability to predict $s_t$ \emph{after} the transition to it has been observed. $p_{\hat{\theta}_t}$ is often parameterized using a recurrent neural network updated online with gradient descent. Unlike surprisal, it assigns zero reward to observations that only contain aleatoric noise, since they do not improve the model's predictive ability.

In \cref{sec:theory}, we will show that while learning progress has desirable exploration properties, the need to evaluate a model's prediction error on transitions \emph{that it has already seen} makes it non-implementable using an in-context learner's prediction errors, for general environments.
\citet{azar2019world} introduced an alternative that instead evaluates improvements in prediction error on \emph{future} transitions $(s_{t+K - 1}, a_{t+K-1}, s_{t+K})$, rather than on the current one. \cref{sec:theory} will show that this alternative --- along with a variant that considers improvements on \emph{all} future transitions --- \emph{can} be computed using an in-context learner's prediction errors.

\vspace{-0.5em}
\paragraph{Bayesian information gain.}

Alternatively, we can seek data that maximally changes a model's \emph{belief} over possible environments --- an approach termed Bayesian information gain (BIG) or Bayesian surprise \citep{sun2011planning,little2013learning,itti2009bayesian,houthooft2016vime,stadie2015incentivizing,mackay1992information,fedorov1972theory,houlsby2011Bayesian,lindley1956measure}. The predictive model now considers a distribution over possible environment dynamics, with initial prior $p(\hat{\theta})$. \citet{sun2011planning} show that the optimal exploration strategy is to maximize the Kullback-Leibler divergence between the prior and posterior over environment parameters after having observed a novel transition:
$r_t \coloneqq KL(p(\hat{\theta} \mid h_t, s_t) \;||\; p(\hat{\theta} \mid h_t))$,
where $p(\hat{\theta} \mid h_t, s_t) \propto p(\hat{\theta} | h_t)p(s_t | h_t, \hat{\theta})$. Intuitively, this objective encourages the agent to reduce its uncertainty about the environment dynamics as quickly as possible. Furthermore, if the initial prior $p(\hat{\theta})$ is equal to the environment's true prior distribution over latents $p(\theta)$, this reward is mathematically equivalent in expectation to $I(s_t ; \theta \mid h_t)$ from \cref{eq:big}.

While optimal in terms of yielding the fastest convergence of $p(\hat{\theta} \mid h_t, s_t)$ towards the true environment dynamics $\theta$, BIG poses significant implementation challenges. It requires a model to do Bayesian inference over a space of possible environments --- an operation that is generally intractable and difficult to approximate. Moreover, this approach assumes the model can \emph{explicitly} parameterize the hypothesis space over $\theta$, which is challenging in rich environments with unknown latent structure. In \cref{sec:theory}, we will take BIG as the theoretically-optimal objective for intrinsic curiosity, and we will evaluate the degree to which more tractable prediction-based rewards computed with in-context learners can approximate it.

\section{Computing Intrinsic Rewards Using In-Context Learners}
\label{sec:theory}

We now ask whether an in-context learner $\rho$ can support intrinsic curiosity rewards that approximate BIG. \Cref{sec:predrewards} writes candidate rewards in $\rho$'s interface; \cref{sec:limits} shows none can identify BIG in general BAMDPs at any finite horizon, and that the asymptotic limits required to escape this barrier are practically out of reach; \cref{sec:BED} restores tractability under a structural restriction, namely that of Bayesian Experimental Design (BED).

\subsection{Predictive Rewards}
\label{sec:predrewards}

We retain the BAMDP setup of \cref{sec:setting} and additionally write $h_{t' \setminus t}$ for the history $h_{t'}$ with $s_t$ replaced by a ``mask'' token --- a counterfactual manipulation of $\rho$'s context used by learning-progress-style rewards.

We assume:
(i) Markovian, stationary dynamics given $\theta$,
(ii) $\rho$ is exactly Bayesian over $\theta$,
(iii) the masked trajectory $h_{t' \setminus t}$ removes only $s_t$'s contribution and $\rho$ remains exactly Bayesian on it,
(iv) posterior consistency: $p(\theta \mid h_t) \to \delta_{\theta^*}$ as $t \to \infty$,
(v) actions are policy-generated; future actions do not update the posterior over $\theta$,
(vi) mixing given $\theta$: $I(s_t; s_{t+k} \mid \theta, h_t, a_{t:t+k-1}) \to 0$ as $k \to \infty$.
Detailed treatments, including the consequences of approximation, masked-input training, and non-mixing dynamics, are in \cref{app:assumptions}.

We want $\pi_\phi$ to collect observations that are maximally informative about the world, and we investigate how to maximize the stepwise Bayesian information gain (BIG) contributed by a new observation $s_t$: $I(s_t; \theta \mid h_t)$. Importantly, we do not allow explicit access or manipulation of $\theta$, and consider intrinsic rewards reviewed in \cref{sec:prior_work} that are built off of state prediction errors, expressed in terms of $\rho$ when possible. Our theoretical results apply to a general class of rewards, including the following common ones:
\begin{align}
  \rsur_t &\coloneqq -\log \ \rho(s_t \mid h_t) , \label{eq:rsur} \\
  \rdl_t &\coloneqq \log \int p(s_t \mid s_{t-1}, a_{t-1}, \theta)\, p(\theta \mid h_t, s_t)\, d\theta - \log \ \rho(s_t \mid h_t). \label{eq:rdl}
\end{align}
Here, $\rsur$ is classical predictive surprisal. $\rdl$ is a learning progress reward based on description length (dl) reduction \citep{schmidhuber1991curious,schmidhuber2009driven,schmidhuber2010formal}, which asks how much a just-observed transition would improve a model's predictions if included in its posterior. Adding to this list, we propose a novel reward:
\begin{align}
    \rsum_t &\coloneqq \sum_{t' = t+1}^{T} \bigl[\log \ \rho(s_{t'} \mid h_{t'}) - \log \ \rho(s_{t'} \mid h_{t' \setminus t})\bigr] , \label{eq:rsum}
\end{align}
where $T$ is the trajectory length. $\rsum$ is one of our core contributions, and experiments show it performs well (\cref{sec:experiments}). It telescopes over the remaining trajectory and is an extension of the NDIGO reward of \citet{azar2019world}, which measured the difference in predictive log-likelihoods for a single future observation $K$ steps in the future. We refer the reader to \cref{app:rone} for a treatment of the NDIGO reward.
We illustrate our setting in \cref{fig:method}, using $\rsum$ as an example.
\begin{figure}[ht]
    \centering
    \includegraphics[width=\textwidth]{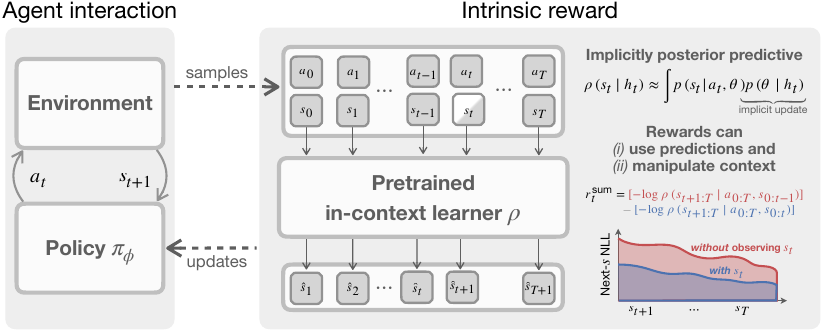}
    \caption{Our method involves using a pretrained in-context learner $\rho$ to construct intrinsic curiosity rewards for a policy $\pi_\phi$. Trajectories unrolled by the policy are passed to the in-context learner, and the reward can be any function of the resulting observation prediction errors on manipulated sequence contexts. We give an example for the reward $\rsum$, which measures the improvement in future prediction errors when a particular state is observed compared to when it is masked.}
    \label{fig:method}
\end{figure}

\subsection{Limits on Intrinsic Rewards From In-Context Learners}
\label{sec:limits}

We first characterize the class of rewards easily implementable with in-context learners $\rho$:
\begin{definition}[Class $\calF$]
\label{def:F}
$\calF$ is the class of arbitrary functions of finitely many predictive likelihoods of $\rho$ at conditioning subsets of the actual trajectory:
\begin{align*}
    \calF \;=\; \Bigl\{ r_t = f\bigl((\rho(X_i \mid Y_i))_{i \in I}\bigr) :
    X_i \text{ observation, } Y_i \subseteq \text{trajectory, }\forall i\in I\,,\text{ and } f:[0,1]^{|I|}\to \mathbb{R} \Bigr\}.
\end{align*}
\end{definition}
At any finite horizon $T$, $\rsur$ and $\rsum$ belong to $\calF$. $\rdl$ does not: the integral in \cref{eq:rdl} requires an explicit Bayesian update of $\rho$'s posterior --- outside $\rho$'s predictive interface in general (see \cref{app:rdl-in-bamdbs-vs-beds}). We return to $\rdl$ in \cref{sec:BED}, where it \emph{does} admit a $\rho$-tractable form for a subclass of environments. Returning to our goal of maximizing BIG, no finite-horizon reward in $\calF$ can be an unbiased estimator:

\begin{restatable}[Impossibility of BIG identification in $\calF$]{theorem}{ThmImpossibility}
\label{thm:impossibility}
Let $\mathcal{M}$ be the class of BAMDPs satisfying (i)--(v) above, whose transition kernels are neither deterministic nor independent across time given $\theta$. Then for every $r_t \in \calF$, there exists $M \in \mathcal{M}$ with $\mathbb{E}^M[r_t \mid h_t] \ne I(s_t; \theta \mid h_t)$.
\end{restatable}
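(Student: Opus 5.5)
The plan is to argue by a two-environment (indistinguishability) construction. Fix an arbitrary $r_t = f\bigl((\rho(X_i \mid Y_i))_{i\in I}\bigr) \in \calF$. The key observation is that under assumption (ii), every quantity $r_t$ depends on is a functional of the \emph{predictive} (marginal) law of trajectories, $p(s_{1:T} \mid a_{1:T-1})$ with $\theta$ integrated out. So $\mathbb{E}^M[r_t \mid h_t]$ is determined entirely by this marginal law. By contrast, $I(s_t;\theta \mid h_t)$ depends on how the marginal decomposes into a mixture over $\theta$. I will therefore build two BAMDPs $M, M' \in \mathcal{M}$ that induce the same predictive law on trajectories, under the same policy, but have different BIG at some $(t,h_t)$. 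Then $\mathbb{E}^M[r_t\mid h_t]=\mathbb{E}^{M'}[r_t\mid h_t]$, and these cannot equal both BIG values, so one of $M, M'$ witnesses the claim.

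For the construction I would take the simplest setting: a single action, binary states, and small $T$ (say $t=2$, $T\ge 3$). In $M$, let $\theta\in\{0,1\}$ with a uniform prior, and let the kernel be a non-deterministic Markov chain whose transition matrix $P_\theta$ depends on $\theta$. This chain is genuinely temporally dependent: e.g.\ a ``sticky'' chain with stay probability $q_\theta$, where $q_0\neq q_1$ and both lie in $(1/2,1)$. Its marginal over $s_{1:T}$ is a mixture of two Markov chains. For $M'$, I would realize the same exchangeable-in-$\theta$ mixture differently, reparameterizing the latent so that part of the trajectory dependence is carried by the dynamics rather than by $\theta$. Concretely, I would take a richer latent $\theta'=(\theta,\xi)$ whose extra component $\xi$ is resampled into the state. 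Alternatively, I would take a latent whose posterior after $h_t$ is less spread, with the residual correlation encoded in a higher-stochasticity kernel with state augmentation, so that the predictive law is identical. The cleanest route I would try first is a \emph{refinement}: in $M'$, let $\theta'$ be a finer latent (e.g.\ $\theta$ together with an independent nuisance bit that affects the kernel), and then merge to match. Since $I(s_t;\theta'\mid h_t)\ge I(s_t;\theta\mid h_t)$ with equality only in degenerate cases, strictness follows when the nuisance bit is identifiable from $s_t$ given $h_t$.

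The main obstacle is ensuring that $M'$ remains in $\mathcal{M}$. The difficulty is exhibiting two genuinely different Markov-given-latent mixtures with identical trajectory marginals: de Finetti-type uniqueness can force the mixing measure to be unique for some kernel families. The non-determinism and temporal dependence assumptions must therefore be used here. I expect to exploit exactly the fact that with temporal dependence, the split between ``dynamics noise'' and ``latent uncertainty'' is not identified from finitely many steps ($T$ finite). With only $T$ observations, the finite-dimensional marginal matches only finitely many moments of the mixing distribution over $P_\theta$. Two mixing distributions can therefore agree on these moments while differing as measures, for instance a point mass versus a two-point mixture with matched transition statistics up to order $T$. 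Computing the posterior entropies then shows the BIG values differ. A fallback if exact moment matching is awkward is a perturbation argument. I would perturb the mixing measure within the kernel of the finite moment map, which is nontrivial since the mixing space is infinite-dimensional while the constraints are finitely many. I would then verify that BIG, a nonlinear functional, changes to first order along some such direction.
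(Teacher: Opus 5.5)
Your eventual route is the paper's. Under (ii), $\mathbb{E}[r_t]$ (or $\mathbb{E}[r_t\mid h_t]$) depends only on the marginal trajectory law up to $r_t$'s finite touch horizon $Q$. The paper passes to unconditional expectations, but your per-history version is equally valid. In the single-action binary flip/sticky chain, that marginal depends on the prior only through its first $Q-1$ moments, so two moment-matched priors are indistinguishable to every $r_t\in\calF$.

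\textbf{The concrete pairs you propose cannot realize this.} Within your one-parameter sticky family:
\begin{itemize}
  \item A prior supported on $\{x,y\}$ is pinned down by its first four moments. Any matching $p'$ has $\int(\theta-x)^2(\theta-y)^2\,dp'(\theta)=0$, so it is supported on $\{x,y\}$, and $m_1$ then fixes the weights.
  \item A point mass is pinned down by its first two moments.
\end{itemize}
So ``point mass versus two-point mixture'' is impossible as soon as three states are observed, and your $M$ (uniform on $\{q_0,q_1\}$) has no moment-matched partner once $Q\ge 5$. You also cannot keep $T$ small: $Q$ is dictated by the given $r_t$ and can be arbitrarily large.

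The refinement route fails for a related reason. In a two-state chain, a kernel-relevant nuisance bit is revisited. Integrating it out therefore makes the $\theta$-conditional law generically non-Markov and changes the joint law of $(\theta,\text{trajectory})$. The inequality $I(s_t;\theta'\mid h_t)\ge I(s_t;\theta\mid h_t)$ holds inside one joint law, so it says nothing about $M$ versus $M'$. Forcing the marginals to agree is just the moment-matching problem again. What you need is a diffuse base prior, e.g.\ uniform on the flip probability, whose moment vector is interior for every $Q$.

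\textbf{The second gap is the step you only assert}, that ``computing the posterior entropies shows the BIG values differ''. The missing idea, which is the paper's Step 4, is to split BIG into a piece the marginal fixes and an aleatoric piece that is linear in the prior. With $k$ the number of flips in $h_t$,
\[
  I(s_t;\theta\mid h_t) \;=\; \Ent(s_t\mid h_t) \;-\; \frac{\int \Ent_{\mathrm{bin}}(\theta)\,\theta^{k}(1-\theta)^{t-2-k}\,dp(\theta)}{\int \theta^{k}(1-\theta)^{t-2-k}\,dp(\theta)} .
\]
Both the predictive entropy and the denominator are fixed by the matched moments. So on the moment fiber, BIG is affine in $p$, not a generic nonlinear functional. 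It separates $p_1$ from $p_2$ if and only if $g(\theta)\coloneqq\Ent_{\mathrm{bin}}(\theta)\,\theta^{k}(1-\theta)^{t-2-k}$ satisfies $\int g\,d(p_1-p_2)\ne 0$.

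Your perturbation idea then closes the argument. Take $p_1$ uniform on $(0,\tfrac12)$ and set $p_2=p_1+\epsilon\,(g-\Pi g)$, where $\Pi$ is the $L^2(0,\tfrac12)$ projection onto polynomials of degree at most $\max(Q,t)-1$.
\begin{itemize}
  \item The perturbation is bounded, so $p_2$ is a probability density for small $\epsilon$.
  \item It annihilates all moments of order $0,\dots,\max(Q,t)-1$, so every $r_t\in\calF$ and the predictive entropy agree under $p_1$ and $p_2$.
  \item It gives $\int g\,d(p_2-p_1)=\epsilon\,\|g-\Pi g\|_2^2>0$, because $g$ is not a polynomial (it carries a $\theta\log\theta$ singularity at $0$).
\end{itemize}
This is a more explicit justification than the paper's appeal to the truncated Hausdorff moment problem and its remark that $\Ent_{\mathrm{bin}}$ is not a polynomial.
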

The proof (\cref{app:proof-impossibility}) constructs two BAMDPs whose priors share enough leading moments to match $\mathbb{E}[r_t]$ for any \emph{finite} $r_t \in \calF$ while their expected BIG differs. We now decompose $\rsur$ and $\rsum$ against $I(s_t; \theta \mid h_t)$ to locate the bias.
\begin{restatable}[Decomposition of $\rsur$]{theorem}{ThmRsur}
\label{thm:rsur}
Under (i)--(iii),
\begin{align*}
    \mathbb{E}[\rsur_t \mid h_t]
    \;=\; {I(s_t; \theta \mid h_t)}
    \;+\; \underbrace{\Ent(s_t \mid h_t, \theta)}_{\textnormal{aleatoric entropy}} .
\end{align*}
$\Ent(s_t \mid h_t, \theta) = \Ent(s_t \mid s_{t-1}, a_{t-1}, \theta)$ is non-negative and does not vanish as $\theta$ becomes identified.
\end{restatable}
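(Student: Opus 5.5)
The argument expands the expected surprisal as a conditional entropy and then splits that entropy using the chain rule for mutual information. By assumption (ii), $\rho(s_t \mid h_t)$ equals the true Bayesian posterior predictive $p(s_t \mid h_t) = \int p(s_t \mid s_{t-1}, a_{t-1}, \theta)\, p(\theta \mid h_t)\, d\theta$. Here $s_t$ is drawn from the true marginal given $h_t$, which is the same mixture under the generative process: $\theta \sim p(\theta \mid h_t)$, then $s_t \sim p(\cdot \mid s_{t-1}, a_{t-1}, \theta)$. Hence
\[
\mathbb{E}[\rsur_t \mid h_t] = -\mathbb{E}_{s_t \sim p(\cdot \mid h_t)}\bigl[\log p(s_t \mid h_t)\bigr] = \Ent(s_t \mid h_t).
\]
I would state explicitly that the expectation in the theorem is over $s_t$ drawn from the environment marginalized over the posterior on $\theta$. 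This is the Bayesian (prior-averaged) expectation, not one conditioned on a fixed true $\theta^*$. Without this, the identity becomes a cross-entropy statement rather than an entropy statement.

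The second step is the identity $I(s_t;\theta \mid h_t) = \Ent(s_t \mid h_t) - \Ent(s_t \mid h_t, \theta)$. Rearranging it gives $\mathbb{E}[\rsur_t \mid h_t] = I(s_t;\theta\mid h_t) + \Ent(s_t \mid h_t, \theta)$, which is the claimed decomposition. To justify $\Ent(s_t \mid h_t,\theta) = \Ent(s_t \mid s_{t-1}, a_{t-1}, \theta)$, I would use assumption (i). Given $\theta$, the dynamics are Markov, so $p(s_t \mid h_t, \theta) = p(s_t \mid s_{t-1}, a_{t-1}, \theta)$; the conditional entropy therefore depends on the history only through the last transition. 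Assumption (iii) is not needed for this statement, and I would remark that it is listed only for uniformity with the later results. Non-negativity follows because this is a (discrete, or for densities suitably interpreted) conditional entropy of a transition kernel.

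The last claim is that the aleatoric term does not vanish as $\theta$ becomes identified. Here I would note that $\Ent(s_t \mid s_{t-1}, a_{t-1}, \theta)$ depends only on the kernel at the true parameter and the visited state-action pair, not on the posterior $p(\theta\mid h_t)$. Under posterior consistency (iv), $I(s_t;\theta\mid h_t)\to 0$, while the aleatoric term converges to the expected entropy of the kernel at $\theta^*$. That limit is strictly positive whenever the kernel is non-deterministic at the visited pairs. The main subtlety is this limit claim: it requires interpreting "does not vanish" as holding for stochastic kernels, which I would spell out. For continuous states I would also flag that differential entropy can be negative. In that case non-negativity holds only for discrete observations, or else the statement should be read with that caveat.
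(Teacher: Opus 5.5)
Your proposal is correct and follows the paper's proof essentially step for step: it identifies the expected surprisal with $\Ent(s_t \mid h_t)$ via (ii), splits this as $\Ent(s_t \mid h_t) = \Ent(s_t \mid h_t, \theta) + I(s_t;\theta \mid h_t)$, and reduces the aleatoric term to the one-step kernel entropy. The differences are cosmetic: you attribute that last reduction to the Markov assumption (i), whereas the paper cites (ii), and your attribution is the more precise one; your caveat that non-negativity holds only for discrete observations, since differential entropy can be negative, is a valid refinement that the paper omits.
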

The proof is in \cref{app:proof-rsur}. \Cref{thm:rsur} highlights the classic ``noisy TV problem'', where an agent seeks sources of unpredictable noise.
\begin{restatable}[Decomposition of $\rsum$]{theorem}{ThmRsum}
\label{thm:rsum}
Under (i)--(iii) and (v),
\begin{align*}
    \mathbb{E}[\rsum_t \mid h_t, a_{t:T-1}]
    \;=\; I(s_t; \theta \mid h_t)
    \;+\; \underbrace{I(s_t; s_{t+1} \mid \theta, h_t, a_t)}_{\textnormal{one-step ``abductive''}}
    \;-\; \underbrace{I(s_t; \theta \mid h_t, a_{t:T-1}, s_{t+1:T})}_{\textnormal{``residual''}}.
\end{align*}
Both terms are non-negative. Under (iv), the residual vanishes as $T \to \infty$; the abductive does not.
\end{restatable}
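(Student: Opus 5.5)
The plan is to compute the expectation of $\rsum_t$ exactly as a conditional mutual information between $s_t$ and the whole future, and then split that quantity with the chain rule. Fix $h_t$ and the action sequence $a_{t:T-1}$.

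\emph{Step 1 (unmasked sum).} By (ii), each $\rho(s_{t'} \mid h_{t'})$ is the exact Bayesian posterior predictive. By (v), conditioning on future actions does not change the posterior over $\theta$. The chain rule for the joint predictive then gives
\begin{align*}
\sum_{t'=t+1}^T \log \rho(s_{t'} \mid h_{t'}) = \log p(s_{t+1:T} \mid h_t, s_t, a_{t:T-1}).
\end{align*}

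\emph{Step 2 (masked sum).} By (iii), $\rho(s_{t'} \mid h_{t' \setminus t})$ is the exact Bayesian predictive given everything except $s_t$. In particular, for $t'=t+1$ the previous state is unobserved, so the prediction marginalises over $s_t$ under the posterior-predictive given $h_t$. The same chain-rule argument should then give
\begin{align*}
\sum_{t'=t+1}^T \log \rho(s_{t'} \mid h_{t' \setminus t}) = \log p(s_{t+1:T} \mid h_t, a_{t:T-1}),
\end{align*}
with $s_t$ integrated out.

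\emph{Step 3 (expectation).} Take the difference and the expectation over $(s_t, s_{t+1:T})$ given $h_t, a_{t:T-1}$. This yields
\begin{align*}
\mathbb{E}[\rsum_t \mid h_t, a_{t:T-1}] = I(s_t; s_{t+1:T} \mid h_t, a_{t:T-1}).
\end{align*}

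\emph{Step 4 (chain rule).} Expand $I(s_t; \theta, s_{t+1:T} \mid h_t, a_{t:T-1})$ in two ways. On one side it equals
\begin{align*}
I(s_t; s_{t+1:T} \mid h_t, a_{t:T-1}) + I(s_t; \theta \mid h_t, a_{t:T-1}, s_{t+1:T}),
\end{align*}
and the second term here is the residual. On the other side it equals
\begin{align*}
I(s_t; \theta \mid h_t, a_{t:T-1}) + I(s_t; s_{t+1:T} \mid \theta, h_t, a_{t:T-1}).
\end{align*}
Two simplifications finish the identity.
\begin{itemize}
\item By (v), future actions carry no information about $\theta$ beyond the history, so $I(s_t; \theta \mid h_t, a_{t:T-1}) = I(s_t; \theta \mid h_t)$.
\item By (i), given $\theta$ and the actions, $s_{t+2:T}$ is conditionally independent of $s_t$ given $s_{t+1}$. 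Hence $I(s_t; s_{t+1:T} \mid \theta, \ldots)$ collapses to the one-step term $I(s_t; s_{t+1} \mid \theta, h_t, a_t)$; the later actions can be dropped because they do not enter the kernel at step $t$.
\end{itemize}
Rearranging the two expansions gives the stated decomposition. Both bracketed terms are mutual informations and hence non-negative.

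\emph{Step 5 (limits).} The abductive term involves only times $t$ and $t+1$, so it is independent of $T$. It is generically positive whenever the kernel is non-deterministic, since $s_t$ enters the next transition. For the residual, bound it by $H(\theta \mid h_t, a_{t:T-1}, s_{t+1:T})$ in the discrete case, or by the expected KL between the posteriors with and without $s_t$ in general. Then use (iv): both posteriors concentrate on $\delta_{\theta^*}$, so the residual tends to $0$. Passing the limit inside the expectation will need dominated or bounded convergence.

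\emph{Main obstacle.} I expect Steps 2 and 4 to be the delicate part, specifically the handling of actions. The action $a_t$ is chosen by the policy after seeing $s_t$, so conditioning on $a_{t:T-1}$ could leak information about $s_t$. I would first try to read (v) as licensing treating the actions as fixed, non-informative inputs in both the masked and unmasked predictives, so that the two joint predictive factorisations share the same action sequence. If that reading fails, the fallback is to carry the action terms explicitly and show that they cancel between the masked and unmasked sums. A secondary technical point is justifying the interchange of limit and expectation in the residual's vanishing.
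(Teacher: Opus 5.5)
Your proposal is correct and takes essentially the same route as the paper. The paper likewise uses its interventional factorization lemma, which rests on (i), (iii) and a reading of (v) that treats actions as fixed non-informative inputs (the same resolution you propose for the action-leakage concern), to write $\rsum_t = \pmi(s_t; s_{t+1:T} \mid h_t, a_{t:T-1})$; it then takes expectations and applies the same chain rule against $\theta$, with (v) removing the future actions and the Markov property collapsing the kernel term to the one-step abductive, while its argument for the vanishing residual is only the one-line remark that conditioning on $s_t$ cannot move a limiting point mass, so your entropy bound with bounded convergence is, if anything, slightly more careful.
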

The proof is in \cref{app:proof-rsum}. The \emph{abductive} term captures kernel-mediated coupling between $s_t$ and the next observation $s_{t+1}$ that is not explained by their common dependence on $\theta$. Similarly, a general treatment of  rewards involving log-ratios lead to signal/abductive/residual decomposition (\cref{app:log-ratio}). Notably, applied to NDIGO, it produces an analogous decomposition whose abductive and residual both fail to vanish.

\paragraph{Beyond \texorpdfstring{$\calF$}{F}: infinite-horizon rewards.}

Taking $T \to \infty$ to drive $\rsum$'s residual to zero already pushes $\rsum$ outside the finite-horizon class $\calF$ of \cref{thm:impossibility}. Are there infinite-horizon rewards built from $\rho$'s predictives that recover BIG in general BAMDPs? Consider the generalization of $\rsum$ with a gap parameter $K$ separating $s_t$ from the predicted block, $\rgap_t(K) \coloneqq \log \ \rho(s_{t+K:T} \mid h_t, s_t, a_{t:T-1}) - \log \ \rho(s_{t+K:T} \mid h_t, a_{t:T-1})$. Under the mixing assumption (vi), $\rgap$ recovers BIG in the iterated limit $\lim_{K \to \infty} \lim_{T \to \infty}$ (\cref{cor:rgap-BIG}), and this double limit is structurally necessary --- any log-ratio reward universally identifying BIG must place all weight at infinite gap with blocks of unbounded size (\cref{thm:double-limit-necessity}). We use $\rgap$ as an analytical tool only: it is not feasibly tractable using in-context learners (see below; details in \cref{app:gap-reward}).

\paragraph{Summary.}

The section identified compounding obstructions to building $\rho$-based estimators of BIG:
\begin{itemize}[leftmargin=*, topsep=0pt, itemsep=0pt]
    \item \emph{Interventional rewards} --- those requiring modifications of $\rho$'s posterior --- lie outside $\rho$'s standard interface. The canonical example, $\rdl$ (\cref{eq:rdl}), folds the just-observed $s_t$ into the posterior over $\theta$ before re-evaluating the kernel at $(s_{t-1}, a_{t-1})$, and is not implementable through $\rho$ in general BAMDPs.
    \item Among $\rho$-implementable rewards in $\calF$, no finite-horizon estimator identifies BIG (\cref{thm:impossibility}); $\rsur$ and $\rsum$ exemplify the bias structure, the former acquiring an aleatoric ``noisy TV'' offset (\cref{thm:rsur}) and the latter a persistent one-step abductive (\cref{thm:rsum}).
    \item Recovery of BIG via a reward related to $\calF$ requires both sequence length $T \to \infty$ and marginalizing gap $K \to \infty$, as exemplified by the theoretical reward $\rgap$ (\cref{cor:rgap-BIG,thm:double-limit-necessity}). The latter is a substantial obstacle: significant unobserved intermediate states $s_{t+1:t+K-1}$ that $\rho$ must marginalize over are poorly supported in sequence models (\cref{app:masking_training}).
\end{itemize}
In sum, we argue there is no practical reward easily implementable with an in-context learner $\rho$ that converges to BIG in general BAMDPs. Moreover, $\rsum$ stands as the best $\rho$-implementable approximation of BIG --- scaling naturally with trajectory length but nevertheless retaining an irreducible abductive bias. We now consider BED, in which several of these obstructions vanish.

\subsection{Bayesian Experimental Design Setting}
\label{sec:BED}

We now consider \emph{Bayesian Experimental Design} (BED) environments, where the transition kernel satisfies $p(s_t \mid s_{t-1}, a_{t-1}, \theta) = p(s_t \mid a_{t-1}, \theta)$ --- the effect of action $a_t$ depends on $\theta$ but not on the current state. This covers any sequential experiment with trials conditionally independent given $\theta$ and the action sequence, such as active learning. Aleatoric biases remain, and $\rsur$'s decomposition is unchanged (\Cref{thm:rsur}). But, any abductive bias of the form seen in \cref{thm:rsum} vanishes structurally, simplifying $\rsum$:
\begin{restatable}[Decomposition of $\rsum$ in BED]{corollary}{CorRsumBED}
\label{cor:rsum-BED}
Under (i)--(v),
\begin{align*}
    \mathbb{E}[\rsum_t \mid h_t, a_{t:T-1}]
    \;=\; I(s_t; \theta \mid h_t)
    \;-\; I(s_t; \theta \mid h_t, a_{t:T-1}, s_{t+1:T}) ,
\end{align*}
i.e., the abductive of \cref{thm:rsum} vanishes structurally. As $T \to \infty$, $\mathbb{E}[\rsum_t] \to I(s_t; \theta \mid h_t)$.
\end{restatable}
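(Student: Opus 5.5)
The plan is to specialize \cref{thm:rsum} to the BED kernel, show that the abductive term vanishes, and then use (iv) for the limit. I would not re-derive the telescoping, since (i)--(iii) and (v) hold here and \cref{thm:rsum} applies directly:
\begin{align*}
    \mathbb{E}[\rsum_t \mid h_t, a_{t:T-1}] = I(s_t;\theta\mid h_t) + I(s_t; s_{t+1}\mid \theta, h_t, a_t) - I(s_t;\theta\mid h_t, a_{t:T-1}, s_{t+1:T}).
\end{align*}
The first claim then reduces to showing $I(s_t; s_{t+1}\mid \theta, h_t, a_t)=0$ in BED.

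For this step, condition on $(\theta, h_t, a_t)$. The BED kernel gives $p(s_{t+1}\mid s_t, a_t,\theta, h_t) = p(s_{t+1}\mid a_t,\theta)$. Markovianity (i) removes the dependence on $h_t$, and the BED restriction removes the dependence on $s_t$. Hence $s_{t+1}\perp s_t \mid (\theta, h_t, a_t)$, and the conditional mutual information is zero.

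The point I would be careful about is whether $a_t$ depends on $s_t$ through the policy. Since $a_t$ is in the conditioning set, that dependence does not matter: given $a_t$, the distribution of $s_{t+1}$ is a function of $(a_t,\theta)$ alone. Assumption (v) ensures that conditioning on policy-generated actions does not itself update $\theta$. The abductive term therefore vanishes structurally, which gives the displayed identity.

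For the limit, the residual term is $I(s_t;\theta\mid h_t, a_{t:T-1}, s_{t+1:T})$. I would show that it tends to $0$ as $T\to\infty$, using the same argument as in \cref{thm:rsum}, namely posterior consistency (iv).

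The residual equals the expected KL divergence between $p(\theta\mid h_t, s_t, \text{future})$ and $p(\theta\mid h_t,\text{future})$. It also equals $\Ent(s_t\mid h_t,\text{future}) - \Ent(s_t\mid \theta, h_t, \text{future})$. By (iv), the masked posterior built from the future observations alone concentrates on $\theta^*$, so $\rho(s_t\mid h_{t'\setminus t})$ and the exact predictive both converge to $p(s_t\mid a_{t-1},\theta^*)$. The entropy gap then goes to zero.

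The main obstacle is passing from pointwise posterior concentration to convergence of the expected mutual information. I would handle it by monotonicity: the residual is nonincreasing in $T$, because conditioning on more observations cannot increase $I(s_t;\theta\mid\cdot)$ when $s_t$ is conditionally independent of the future given $\theta$ (a data-processing argument). It is also bounded by $I(s_t;\theta\mid h_t)<\infty$. Dominated convergence together with (iv) then gives a limit of zero. Substituting into the identity yields $\mathbb{E}[\rsum_t]\to I(s_t;\theta\mid h_t)$.
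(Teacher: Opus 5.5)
Your proposal is correct and follows the paper's route: invoke \cref{thm:rsum}, kill the one-step abductive via the BED conditional independence $s_{t+1} \perp s_t \mid (\theta, h_t, a_t)$, and send the residual to zero with posterior consistency (iv). Your monotonicity and dominated-convergence treatment of the limit is more careful than the paper's one-line ``conditioning on $s_t$ cannot move a delta'' argument. One technical point there: dominated convergence needs a pointwise dominating bound, for example the predictive entropy bounded by $\log|\mathcal{S}|$ when the observation space is finite. The bound $I(s_t;\theta\mid h_t)$ you cite only controls the expectation, so it does not serve that role.
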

The proof is in \cref{app:proof-rsum-BED}. Thus $\rsum$ is an asymptotically unbiased BIG estimator in BED.
For $\rdl$ in the BED setting, things also improve. The posterior predictive term admits a $\rho$-predictive form via a counterfactual action commitment that ``copies'' the current transition $(s_{t-1}, a_{t-1}, s_t)$ into $\rho$'s context:
\begin{equation}
\label{eq:rdl-counterfactual}
  \int p(s_t \mid s_{t-1}, a_{t-1}, \theta)\, p(\theta \mid h_t, s_t)\, d\theta
  \;=\; \rho\bigl(s_{t+1} = s_t \,\big|\, h_t, s_t,\, a_t = a_{t-1}\bigr) .
\end{equation}
The counterfactual is the action choice $a_t = a_{t-1}$ (re-using the same action), valid because actions are policy-generated and the BED environment's transition kernel has no $s_{t-1}$ dependence. The identity in \cref{eq:rdl-counterfactual} makes $\rdl$ implementable using in-context learners at the cost of a hypothetical extra rollout step.
\begin{restatable}[Decomposition of $\rdl$ in BED]{theorem}{ThmRdlBED}
\label{thm:rdl-BED}
Under (i)--(v), with
$L(\theta) \coloneqq p(s_t \mid a_{t-1}, \theta)/\rho(s_t \mid h_t)$,
\begin{align*}
    \mathbb{E}[\rdl_t \mid h_t]
    \;=\; I(s_t; \theta \mid h_t)
    \;+\; \mathbb{E}_{s_t \mid a_{t-1}}\!\Bigl[\log \ \mathbb{E}_{p(\theta \mid h_t, s_t)}[L(\theta)]
                                              - \mathbb{E}_{p(\theta \mid h_t, s_t)}[\log  \ L(\theta)]\Bigr] ,
\end{align*}
i.e., BIG plus a non-negative Jensen gap. Both terms in the expectation vanish as $t \to \infty$ under (iv).
\end{restatable}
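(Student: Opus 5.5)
We will work directly from the definition of $\rdl_t$ in \cref{eq:rdl}, specialized to BED so that the kernel is $p(s_t \mid a_{t-1},\theta)$. The first step is to rewrite the posterior-predictive term. Write $\rho \coloneqq \rho(s_t\mid h_t)$. Assumption (ii) and Bayes' rule give $p(\theta\mid h_t,s_t) = p(\theta\mid h_t)\,p(s_t\mid a_{t-1},\theta)/\rho$. Hence
\begin{align*}
\int p(s_t\mid a_{t-1},\theta)\,p(\theta\mid h_t,s_t)\,d\theta \;=\; \rho\,\mathbb{E}_{p(\theta\mid h_t,s_t)}[L(\theta)],
\end{align*}
so that pointwise
\begin{align*}
\rdl_t = \log \mathbb{E}_{p(\theta\mid h_t,s_t)}[L(\theta)].
\end{align*}
By \cref{eq:rdl-counterfactual}, this term is $\rho$-implementable, but the decomposition itself only needs Bayes' rule.

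The second step is to add and subtract $\mathbb{E}_{p(\theta\mid h_t,s_t)}[\log L(\theta)]$. The subtracted term has pointwise value
\begin{align*}
\mathbb{E}_{p(\theta\mid h_t,s_t)}\bigl[\log p(s_t\mid a_{t-1},\theta)-\log\rho(s_t\mid h_t)\bigr] = \KL\bigl(p(\theta\mid h_t,s_t)\,\|\,p(\theta\mid h_t)\bigr),
\end{align*}
which is the same log-likelihood-ratio form of the Bayesian-surprise identity recalled in \cref{sec:prior_work}. We then take the expectation over $s_t\sim\rho(\cdot\mid h_t)$, with $a_{t-1}\in h_t$ fixed. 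This is the marginal $\mathbb{E}_{s_t\mid a_{t-1}}$ of the statement. The expected KL equals $I(s_t;\theta\mid h_t)$ by the standard identity $I(X;Y)=\mathbb{E}_X\,\KL(p(Y\mid X)\|p(Y))$. The leftover term is exactly the displayed bracket. Its non-negativity is Jensen's inequality applied to the concave $\log$ under $p(\theta\mid h_t,s_t)$.

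The third step is the asymptotics, and we expect it to be the main obstacle. Assumption (iv) gives $p(\theta\mid h_t)\to\delta_{\theta^*}$, and the posterior after one extra observation also concentrates. In the limit, $L(\theta)$ is evaluated at a point mass, so $L(\theta^*) = p(s_t\mid a_{t-1},\theta^*)/\rho(s_t\mid h_t)\to 1$ because $\rho(s_t\mid h_t)\to p(s_t\mid a_{t-1},\theta^*)$. Both $\log\mathbb{E}[L]$ and $\mathbb{E}[\log L]$ then tend to $\log 1 = 0$.

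Rigorous convergence needs interchanging limits and expectations. We would first try dominated convergence, assuming (as in \cref{app:assumptions}) bounded likelihood ratios or a finite/compact observation space so that $\log L$ is uniformly integrable. As a cleaner fallback, we would bound the Jensen gap by $\mathrm{Var}_{p(\theta\mid h_t,s_t)}(L)$ divided by a lower bound on $L$, which vanishes as the posterior concentrates. We would also note that the BIG term vanishes separately, since $I(s_t;\theta\mid h_t)\le \Ent(\theta\mid h_t)$ for discrete $\theta$, or by the same concentration argument otherwise.
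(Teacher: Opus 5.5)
Your proposal is correct and follows the paper's proof step for step. You reduce $\rdl_t$ to $\log\mathbb{E}_{p(\theta\mid h_t,s_t)}[L(\theta)]$, split off $\mathbb{E}_{p(\theta\mid h_t,s_t)}[\log L(\theta)]$, identify it via Bayes' rule with $\KL\bigl(p(\theta\mid h_t,s_t)\,\|\,p(\theta\mid h_t)\bigr)$ (whose average over $s_t\sim\rho(\cdot\mid h_t)$ is $I(s_t;\theta\mid h_t)$), and get the sign of the remainder from Jensen. Your asymptotic sketch goes beyond the paper, which asserts the vanishing without proof, and it is reasonable under the bounded-likelihood-ratio regularity you impose; one slip: the general second-order bound on the Jensen gap is $\mathrm{Var}_{p(\theta\mid h_t,s_t)}(L)/(2m^2)$ for a lower bound $m$ on $L$, not $\mathrm{Var}(L)/m$, since the gap is invariant to rescaling $L$ and the bound must be too.
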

The proof is in \cref{app:proof-rdl-BED}.
Like $\rsum$, $\rdl$ converges to BIG in BED, but with opposite-sign bias: $\rsum$ is biased downward by a residual on future data, $\rdl$ upward by a Jensen gap on past data. We thus get $\mathbb{E}[\rsum_t] \le I(s_t; \theta \mid h_t) \le \mathbb{E}[\rdl_t]$, with the gap closing asymptotically. Importantly, however, it should be noted that $\rdl$'s bias (the Jensen gap) vanishes together with BIG $I(s_t; \theta \mid h_t)$ itself, as $t\to\infty$ (see \cref{app:proof-rsum-BED}). 

\vspace{-0.5em}
\paragraph{Summary.}

BED removes the obstructions of general BAMDPs: $\rsum$ and $\rdl$ both asymptotically recover BIG. However, $\rsum$ recovers BIG as $T\to \infty$ while $\rdl$ does so as $t\to\infty$. This means that given long sequences, $\rsum$ can, in principle, recover BIG at finite $t$ while $\rdl$ only trivially recovers BIG in the limit where the signal itself vanishes. Nevertheless, both rewards can still be useful at finite $T$ and finite $t$, as we now investigate in experiments.

\vspace{-0.5em}
\section{Experiments}
\label{sec:experiments}
\vspace{-0.5em}

We evaluate our framework on three structured BED environments: Gaussian Process function estimation, Mastermind code-breaking, and Alchemy transition-rule discovery, each requiring the policy to actively gather observations that are informative about unknown latent variables. 

We train policies with PPO or REINFORCE using curiosity-driven rewards from \cref{sec:predrewards}: $\rsur$, $\rdl$, and $\rsum$. We also compare to $\rtask$, which uses a validation metric of the environment (indicative of having learned its dynamics) as the reward.
For each environment, we pretrain an in-context learner $\rho$ by sampling environment latents from the prior $\theta \sim p(\theta)$ and unrolling trajectories with uniformly-random actions; as such, we call these Prior-Fitted Networks (PFNs) in line with prior work \citep{muller2022transformers}. Some environments additionally admit an exact predictive Bayesian oracle predictive $\rho^*$. 
% For each environment we additionally consider a \emph{noisy} variant in which a corruption mechanism distorts some observations; in these settings the Bayesian oracle is intractable, whereas the trained $\rho$ can be applied in both settings. 
Details and results for each environment are given below.

\vspace{-0.5em}
\subsection{Gaussian Process}

\begin{figure}[ht]
    \centering
    \includegraphics[width=\textwidth]{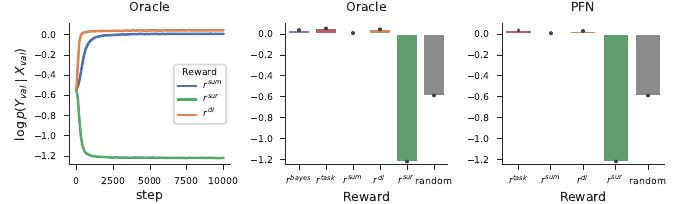}
    \caption{Comparison of validation log-likelihood for Gaussian Processes for the Oracle and PFN predictive models. (\emph{Left}) For $r^{sur}$ validation score decreases over training steps. (\emph{Center and right}) Final validation scores: $r^{sum}$ and $r^{dl}$ achieve equivalent performance to training on $\rtask$, but $r^{sur}$ performs significantly worse than a random policy.}
    \label{fig:gp_bar}
\end{figure}

The Gaussian Process (GP) environment models active function estimation in a continuous domain.
An unknown function $f$ is sampled from a GP prior with a rational quadratic kernel. The function $f$ is represented by a finite set of inducing points sampled on a regular grid of resolution $R$ over the domain
$[-x_{\max}, x_{\max}]^2$;
the posterior mean at any query location is obtained by exact GP conditioning on these inducing points. More details can be found in \cref{app:gp}.
 
A fresh function $f$ is drawn at the start of each episode. At each step the policy selects a query location $x_t \in [-x_{\max}, x_{\max}]^2$ and observes a noisy evaluation $y_t = f(x_t) + \varepsilon_t$, where  $\varepsilon_t \sim \mathcal{N}\bigl(0,\, \sigma_{\mathrm{noise}}^2(x_t)\bigr)$
is spatially varying noise. The noise is in the form of a tiled checkerboard pattern (details in \cref{app:gp}), and is the same in every episode. An optimal information-gathering policy should therefore learn to preferentially query the low-noise tiles. Consecutive observations are conditionally independent given $f$ and the noise map. Therefore, the environment follows the BED setting with $\theta=f$, $a_t = x_t$, and $s_{t+1} = y_t$.  

\paragraph{Predictive model.}

The Bayesian oracle $\rho^*$ performs exact GP posterior inference.
After observing $h_t$, the predictive distribution for $y_t$ is  
$\rho^*(y_t \mid h_t, x_t) = \mathcal{N}\!\bigl(y_t;\; \mu_t(x_t),\; \sigma^2_t(x_t)\bigr)$
where $\mu_t$ and $\sigma^2_t$ are the standard GP
posterior mean and predictive variance (including observation
noise) conditioned on $h_t$.
Note $\rho^*$ has access to the true kernel and the noise map. The learned PFN $\rho$ is a causal Transformer. Details about the training and architecture are in \cref{app:gp-worldmodel}. We also show in \cref{fig:gp-corr} that $\rho$ approximates $\rho^*$ well.

\begin{figure}[ht]
    \centering
    \includegraphics[width=0.37\textwidth]{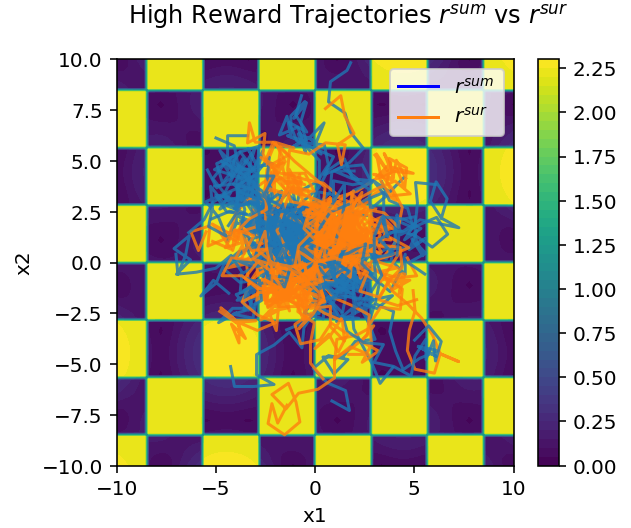}
    \includegraphics[width=0.61\textwidth]{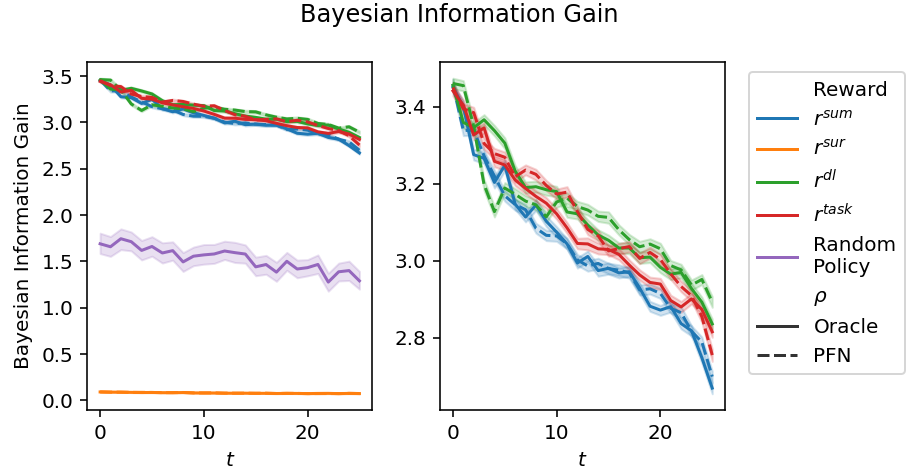}
    \caption{\emph{Left:} The colourbar depicts noise strength of the GP. Top 10\% of random paths ranked by the rewards show that $\rsum$ is high for the noiseless tiles, conversely $\rsur$ favours the noisy regions. \emph{Center:} BIG across trajectories generated by trained policies. 
    $r^{sur}$ has the lowest information gain due to the Noisy TV problem. 
    \emph{Right:} Zoomed comparison.
    Higher BIG for $\rsum$, $\rdl$, and $\rtask$ indicates that these policies select queries that are more informative about the underlying function. 
    % Higher BIG indicates the policy selects queries that are more informative about the underlying function. \emph{Right:} Zoomed comparison. $\rsum$, $\rdl$, and $\rtask$ are closely grouped, whereas $r^{sur}$ has the lowest information gain due to the Noisy TV problem.
    }
    \vspace{-10pt}
    \label{fig:gp-all}
\end{figure}

\paragraph{Validation.}

To measure the quality of the collected trajectory $h_T$, we evaluate on a set of validation points on the grid. Let $(X_{val}, Y_{val}) = \{(\tilde x_j, \tilde y_j)\}_{j=1}^{M}$ be a random subset of the episode's ground-truth GP inducing points.
Regardless of whether $\rho$ is a Bayesian oracle or a trained PFN, we validate by conditioning the oracle $\rho^*$ on the trajectory $h_T$ and report the mean log-likelihood
$V = \frac{1}{M}\sum_{j=1}^{M} \log\rho^*(\tilde y_j \mid h_T,\,\tilde x_j)$.

\paragraph{Results.}

Since the environment contains aleatoric noise, we observe a stark divergence in performance between the rewards. As shown in \cref{fig:gp_bar}, both the random baseline and the policy trained with $\rsur$ perform significantly worse than those trained with $\rdl$ and $\rsum$. We note that the validation log-likelihood steadily decreases over training, despite $\rsur$ increasing (\cref{fig:gp-rsur-train}). 
This provides strong empirical confirmation of \cref{thm:rsur}, which states that $\rsur$ is susceptible to the ``noisy TV'' problem, whereas $\rsum$ and $\rdl$ successfully distinguish between reducible epistemic uncertainty and irreducible aleatoric noise. 

To further confirm this, in \cref{fig:gp-all}, we collect 1000 trajectories with a random policy. We then calculate $\rsur$ and $\rsum$ for these trajectories, and in green highlight the ones in the 90\% percentile for each reward. We can observe that the trajectories with the highest $\rsum$ stray away from the noisy checkerboards, whereas those for $\rsur$ concentrate in these regions instead. We also plot the BIG across trajectories collected by the policies trained with the different rewards. As predicted, the BIG of $\rsur$ is much lower than that of other rewards. Finally, in \cref{app:decomposition}, we show empirically that $\rsum$ and $\rdl$ decompose into BIG and a shrinking finite-time nuisance term, as predicted by \cref{cor:rsum-BED} and \cref{thm:rdl-BED}.

\subsection{Mastermind}

\begin{figure}[ht]
    \centering
    \includegraphics[width=\textwidth]{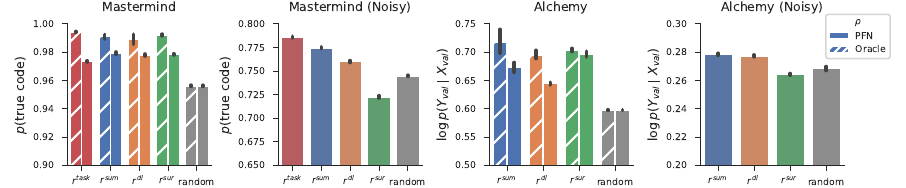}
    \caption{Validation scores for Mastermind and Alchemy. In the standard variant, all rewards are competitive. In both the noisy variants, $r^{sur}$'s performance falls below the random baseline, whereas $r^{sum}$ and $r^{dl}$ remain robust.}
    \label{fig:mm-alchemy}
\end{figure}

Mastermind is a code-breaking game parameterized by a secret code $c \in \{0,\dots,C{-}1\}^{L}$ made of coloured pegs, where $L$ is the code length and $C$ the number of colours. At each step, the policy submits a guess $g_t$ and receives feedback $(b_t, w_t)$, where $b_t $ counts the number of pegs correct in both colour and position, and $w_t$ counts the total number of pegs correct in colour irrespective of position. 
The feedback depends only on the current guess and the fixed code. This follows the BED setting with $\theta = c$, $a_t = g_t$, and $s_{t+1} = (b_t, w_t)$.

We additionally consider a noisy variant of the environment in which one randomly chosen colour per episode is designated as ``evil.'' Any guess position whose colour matches the evil colour has that position's value replaced with a uniformly random colour before feedback is computed, so the resulting $(b_t, w_t)$ may not reflect the true code. Additionally, the evil colour is guaranteed not to appear in the secret code itself. The evil colour identity is unknown to the policy and the learned world model $\rho$. 

\vspace{-0.5em}
\paragraph{Predictive model.}

The oracle $\rho^*$ maintains an exact posterior over secret codes by enumerating all $C^L$ possibilities and filtering to those consistent with the observed history. It computes the predictive probability of each new response as the fraction of consistent codes that would produce that response:
\[
  \rho^*(s_t \mid h_t)
  = \frac{|\{c' : c' \text{ consistent with } h_t \text{ and } (a_{t-1}, s_t)\}|}
         {|\{c' : c' \text{ consistent with } h_t\}|}.
\]
$\rho^*$ is only tractable when the environment is deterministic.
The learned PFN $\rho$ is a causal Transformer pretrained on randomly generated trajectories. More details about consistency, the PFN architecture, and training process are in \cref{app:mm-bayes,app:mm-worldmodel}.

\vspace{-0.5em}
\paragraph{Validation.}

After the policy collects a trajectory of $T$ guesses and responses $h_T$, we measure how much probability mass the world model places on the true secret code.
We construct a synthetic action/observation pair $\tilde{a}_v = c$, $\tilde{s}_v = (L,L)$ --- the true code paired with a perfect-score response --- append it to the trajectory, and report $V = \rho^*(\tilde{s}_v \mid h_T, \tilde{a}_v)$. For the noisy variant, we use $\rho$ instead of $\rho^*$.

\vspace{-0.5em}
\paragraph{Results.} 

As seen in \cref{fig:mm-alchemy}, $\rsur$ achieves near-optimal performance in the standard setup given that the environment has no source of aleatoric noise, matching the performance of training directly for the task. $\rsum$ and $\rdl$ also achieve comparable performance. In this setting, all reward functions perform consistently across both the Bayesian oracle and the trained PFN, further validating that sequence models can serve as effective predictive models for intrinsic reward computation.

However, the introduction of noisy pegs significantly alters these dynamics. In the noisy variant, the performance of the policy trained with $\rsur$ falls even below the random baseline. Conversely, policies trained with $\rsum$ and $\rdl$ remain robust, retaining high performance despite the corruption mechanism. This confirms that $\rsur$ becomes a liability in environments with stochastic transitions, while $\rsum$ and $\rdl$ consistently identify informative actions.

\subsection{Alchemy}

Alchemy \citep{wang2021alchemy} is a symbolic meta-learning environment. Each environment defines a fixed set of transition rules mapping (stone, potion) input pairs to transformed output stones, and the agent must discover these rules through sequential experimentation.

The environment is parameterized by a discrete ID $\theta$, which determines the transition dynamics. 
Stones are four-dimensional discrete vectors whose dimensions have
$(3, 3, 3, 4)$ possible values respectively, and potions take one of 6 values.
At each step the policy selects both the initial stone and the potion $(i_t, p_t)$ and observes final stone $f_t$ from the environment.
Consecutive observations are conditionally independent given $\theta$
(i.e.\ $i_{t+1}$ is chosen freely and need not equal $f_t$).
Therefore, Alchemy is BED with $a_t = (i_t, p_t)$, and $s_{t+1} = f_t$.

Similar to Mastermind, we consider a noisy variant of the environment in which one randomly chosen stone type per episode is designated as ``evil.'' Any transition whose input stone matches the evil stone has its output replaced with a uniformly random stone.

\vspace{-0.5em}
\paragraph{Predictive model.}

% The oracle $\rho^*$ performs exact Bayesian inference over environment IDs.
Like Mastermind, the oracle $\rho^*$ performs exact Bayesian inference by enumerating all $N$ candidates, and eliminating those that are not consistent (all observed transitions match $\theta$'s rules). $\rho^*$ is only tractable when the environment is deterministic.
%The predictive probability of a new observation $(a_t, s_t)$ is
%\[
%  \rho^*((a_t, s_t) \mid h_t)
%  = \frac{|\{e' : e' \text{ consistent with } o_{1:t}\}|}
%         {|\{e' : e' \text{ consistent with } o_{1:t-1}\}|}.
%\]
The learned PFN $\rho$ is a causal Transformer with an
autoregressive GRU decoder that factorizes output prediction across the
components of $f_t$ (see \cref{app:alchemy_architecture} for full
architectural and training details).

\vspace{-0.5em}
\paragraph{Validation.}

After the policy collects a trajectory of $T$ transitions $h_T$, we
evaluate how well the world model can predict held-out transitions from
the same environment.
From the ground-truth transition table of the current environment ID,
we randomly sample $M{=}10$ transitions, excluding any whose initial
stone matches the evil stone (so that only clean transitions are used).
These are appended sequentially to the trajectory, and we report the
mean predictive log-probability
$V = \frac{1}{M}\sum_{j=1}^{M}
     \log\rho(\tilde{s}_j \mid h_{1:T},\,\tilde a_{1:j-1}, \tilde{s}_{1:j-1})$.
For the non-noisy environment we evaluate under both the Bayesian oracle
$\rho^*$ and the trained PFN $\rho$; for the noisy (evil stone) variant we evaluate under $\rho$ only.

\vspace{-0.5em}
\paragraph{Results.}

We observe a similar pattern as Mastermind in the Alchemy environment in \cref{fig:mm-alchemy}. In the noiseless task, $\rsur$ and $\rsum$ both exhibit strong performance, followed by $\rdl$. Under the noisy variant, $\rsur$ again demonstrates a failure to generalize, with performance below that of a random policy. In contrast, $\rsum$ maintains its superiority with both oracle and PFN, achieving the highest validation log-probability. 

The results across all three domains underscore the limitation of surprisal-based curiosity in realistic noisy settings, and demonstrate the robustness of $\rsum$ and $\rdl$. They also demonstrate the effectiveness of trained in-context learners as predictive models in these settings, as replacements for the intractable BIG.

\section{Discussion and Future Work}
\label{sec:discussion}

We demonstrated that in-context learners can efficiently evaluate prediction-based intrinsic curiosity rewards, under certain conditions. In Bayesian Experimental Design (BED) settings, our proposed reward $\rsum$ and description length reward from prior work $\rdl$ asymptotically approximate Bayesian information gain (BIG) and drive exploration effectively, though $\rdl$ suffers from asymptotic pathologies that $\rsum$ avoids. Both avoid the pitfalls of standard surprisal. This framework opens several promising directions for future work.

\vspace{-0.5em}
\paragraph{General BAMDPs.}

Extending this approach to general Bayes-Adaptive Markov Decision Processes (BAMDPs) requires handling environments with temporal structure. Implementing $\rdl$ or BIG requires access to a model's parameters, which is challenging using in-context learners that only expose the model's predictions. A promising approach is to project the in-context learner's hidden states into ``task vectors'' \citep{hendel2023context} that represent the environment's latent dynamics. Modulating or manipulating these task vectors counterfactually could allow the direct estimation of $\rdl$ or BIG in latent space. Attempting to approximate $\rgap$ --- which asymptotically converges to BIG --- using long context lengths and gaps is also a possibility.

\vspace{-0.5em}
\paragraph{Training the in-context learner.}

Matching the true Bayesian posterior predictive over the distribution of policy-sampled trajectories during training poses significant challenges. Large foundation models trained on broad data mixtures offer powerful in-context learning, but they are often poorly calibrated to the true Bayesian posterior predictive \citep{badola2025multi}. Furthermore, standard causal Transformers suffer from length generalization issues and positional encoding leakage, breaking the martingale properties necessary for Bayesian coherence \citep{falck2024context}. While this is partially mitigated in BED by using permutation-invariant sequence models trained on synthetic priors, covariate shift remains a primary failure mode. Specifically, a model trained on trajectories from a random or fixed policy will degrade when evaluated on informative trajectories collected by an active, reward-maximizing deployment policy \citep{yadlowsky2023pretraining}. Addressing this distribution shift may require techniques that jointly fine-tune the predictive model and the policy \citep{ross2011reduction,ivanova2024step,azar2019world}, though optimizing against non-stationary intrinsic rewards from a changing $\rho$ might introduce instability.

\vspace{-0.5em}
\paragraph{Online RL setting.}

This work focuses on an episodic meta-RL setting, but translating these methods to single-lifetime online reinforcement learning introduces several difficulties. First, the fixed context window of sequence models limits the duration over which an agent can accumulate learning progress. Second, in-context learning empirically exhibits saturation; predictive improvements decrease as the context length grows \citep{elmoznino2025incontext}, perhaps requiring periodic weight updates to consolidate long-term knowledge \citep{bornschein2024Transformers}. Third, the rewards themselves become computationally intractable in continuous online settings. Currently, the intrinsic reward computation is naturally episodic and calculated only at the conclusion of a trajectory (where $\rsum$ evaluates over the remainder of the fully unrolled sequence, and $\rdl$ requires trajectory reruns with the new parameters). Approximating these episodic metrics for online scaling will require stochastic sampling or truncated evaluation windows. Finally, exploration in domains with low-level action spaces is inherently difficult. Leveraging emergent temporal and action abstractions within autoregressive models could allow intrinsic rewards to operate over higher-level behaviors, improving the efficiency of information gathering \citep{kobayashi2025emergent}.

\vspace{-0.5em}
\paragraph{Comparing $\rsum$ and $\rdl$.}

Empirically, we found that $\rsum$ and $\rdl$ both performed well, but that the optimal one between the two depended on the environment in which they were deployed, suggesting that they can be complementary. Theoretical analysis in BED settings established that $\rsum$ bounds BIG from below, whereas $\rdl$ bounds it from above. 
The bounds for each are fundamentally different, $\rdl$ converging as $t \to \infty$, and $\rsum$ as $T\to\infty$. $\rdl$ technically recovers BIG but its bias vanishes together with the signal itself, and $\rsum$ should be more robust and approach BIG at finite $t$. However, our experiments show that this distinction may not manifest in simple environments, and at finite $T$ and $t$. Future work might be able to characterize the properties of environments that influence these factors. In addition, because the two bounds converge asymptotically from opposite sides, future work could explore hybridizing them to yield a tighter, more stable estimator of true learning progress.

\begin{ack}
The authors acknowledge Seijin Kobayashi for helpful feedback and discussions.
EE and SB are PhD students at the Université de Montréal \& Mila -- Quebec AI Institute.
GL is an Associate Professor at the Université de Montréal and a Core Academic Member at Mila -- Quebec AI Institute.
EE acknowledges support from Vanier Canada Graduate Scholarship \#492702.
GL acknowledges support from NSERC Discovery Grant RGPIN-2018-04821, the Canada Research Chair in Neural Computations and Interfacing, and a Canada-CIFAR AI Chair.
\end{ack}

\bibliographystyle{apalike}
\bibliography{references}

\clearpage
\begin{appendices}
\onecolumn
\crefalias{section}{appendix}   % For cleveref to name correctly
\crefalias{subsection}{appendix} % For cleveref to name correctly
\crefalias{subsubsection}{appendix} % For cleveref to name correctly
\counterwithin{figure}{section} % Reset counter for appendix figures
\counterwithin{table}{section}  % Reset counter for appendix tables

\section{Additional Prior Work on Intrinsic Curiosity}
\label{app:prior_work}

\paragraph{Surprisal in a Learned Latent Space.}

One effective approach for mitigating the problem of aleatoric noise is to measure surprisal in a learned latent space that only encodes \emph{predictable} information about the observation \citep{pathak2017curiosity,kim2019exploration,burda2018largescale}. However, these methods struggle in partially observed environments where the state representation depends on a long history, and also remain sensitive to sources of aleatoric noise that are produced by the policy's actions \citep{burda2018largescale}. Ultimately, the problem of how to train a latent space for surprisal-based exploration that is robust to aleatoric noise remains open. Since our investigation into the use of in-context learners for intrinsic curiosity leverages only $\rho$'s input/output interface for computing observation-level prediction errors, we do not consider methods that operate in latent space (although if in-context learners are trained in an appropriate latent space, the approach might be viable).

\paragraph{Novelty.}

An approach with a long history in reinforcement learning is to seek \emph{novel} states of the environment, as measured by visitation counts or density models \citep{hazan2019provably,strehl2008analysis,bellemare2016unifying,ostrovski2017count}. Novelty is deeply related to surprisal; while the two are not quite equivalent \citep{barto2013novelty}, they both suffer from the same fundamental problem: noisy regions of the environment with high observation entropy are a reliable source of both surprisal and novelty.

\paragraph{Empowerment.}

Empowerment is an alternative framework for intrinsic motivation that, instead of gathering observations that improve a model of the environment, aims to explore regions of the environment that are \emph{controllable} by the policy \citep{klyubin2005all,klyubin2005empowerment,salge2014empowerment}. Specifically, empowerment is defined as the channel capacity of the policy’s actuation channel: $I(S' ; A \mid S)$ where $I$ is mutual information, $S$ is a random variable for the history of all past states, $A$ is a random variable for the agent's current action, and $S'$ is a random variable for the state(s) the policy observes after some future horizon following the action ($S'$ can represent a single future state at a fixed horizon, or potentially a trajectory of such states). The distributions over $A$, $S$, and $S'$ depend on both the environment and the policy, and using empowerment as an intrinsic reward involves maximizing $I(S' ; A \mid S)$ with respect to the latter. Intuitively, this yields policies that learn to exert control over their environment by exploring regions of the state space where actions provide substantial information about future environment dynamics.

It is unclear how empowerment theoretically relates to the above methods that seek to improve a model of the environment. While exploring controllable regions of an environment might improve a model as a side effect (e.g., knowing what is and is not controllable involves understanding the environment's dynamics at some level), the two are not equivalent in general. While empowerment appears to be a viable exploration strategy empirically \citep{mohamed2015variational} and can be highly effective in modeling human exploration behaviour \citep{du2023what}, we instead focus here on intrinsic rewards that explicitly aim to improve a model of the environment.

\paragraph{Open-endedness.}

The aims of intrinsic curiosity closely align with those of research on ``open-endedness'' \citep{stanley2015greatness}, typically characterized as an ongoing process that continually generates new data, tasks, or behaviours that are interesting, learnable, novel, without being \emph{too} difficult relative to an agent's current capabilities. Most existing approaches operationalize open-endedness either through diversity/complexity heuristics \citep{wang2019paired,wang2024voyager} or through proxies intended to capture subjective human notions of interestingness \citep{ding2023quality,bradley2024qualitydiversity,zhang2024omni}.

A recent position paper attempts to mathematically formalize the concept of open-endedness \citep{hughes2024open}, defining it as a process that continually generates data that is (a) surprising and (b) learnable, where both are defined in terms of the prediction errors of a world model that has observed the data up to a particular timepoint. Notably, their learnability component closely resembles learning progress rewards. We further hypothesize that their explicit novelty term is redundant: if learning progress is sustained and does not stagnate, then novelty and surprisal are already implied, because continued progress requires exposure to information not yet captured by the model.

\section{Assumptions for Theoretical Results}
\label{app:assumptions}

The main text states six assumptions in compact form. We unpack each here, flagging the nuances that matter for the derivations.

\paragraph{(i) Markovian, stationary dynamics given $\theta$.}
For each fixed $\theta$, the transition kernel $p(s_{t+1} \mid s_t, a_t, \theta)$ is Markov in the state-action chain and time-independent. Stationarity rules out time-indexed families $\{P_t\}_t$ in which the conditional law itself depends on chain time. Markov dynamics given $\theta$ are what let us factorize $\rho$'s joint over future observations (\cref{lem:factor}); stationarity ensures that masking-based rewards ($\rsum$ and the NDIGO reward of \citet{azar2019world}) can be evaluated by anchoring their kernels at the actual chain times.

\paragraph{(ii) $\rho$ is exactly Bayesian over $\theta$.}
The sequence model emits the exact posterior-predictive $\rho(s_{t'} \mid h_{t'}) = \int p(s_{t'} \mid s_{t'-1}, a_{t'-1}, \theta)\, p(\theta \mid h_{t'})\, d\theta$, with prior $p(\theta)$ matching the distribution of training tasks. In practice $\rho$ is a finite-capacity neural network and exact Bayesian computation is an idealization; \cref{app:rho_approximation} discusses how approximation error propagates into reward biases.

\paragraph{(iii) Observation-only masking.}
The masked history $h_{t' \setminus t}$ replaces $s_t$ with a mask token while leaving every other state and every action intact. We assume $\rho$'s response to $h_{t' \setminus t}$ is the posterior-predictive obtained by Bayesian marginalization over the masked $s_t$, treating actions as still observed. This is a subtle assumption when $\rho$ is a learned sequence model: masked queries are out-of-distribution unless $\rho$ has been trained to handle them. See \cref{app:masking_training}.

\paragraph{(iv) Posterior consistency.}
The selection rule generates trajectories under which $p(\theta \mid h_t) \to \delta_{\theta^*}$ in probability as $t \to \infty$. This requires (a) $\theta$ identifiability from the achievable observation distributions under the policy and (b) sufficient coverage of the action space (each action taken often enough to inform $\theta$). Under standard regularity conditions (smooth parametric model, positive-definite per-arm Fisher information $\mathcal{I}(\theta) \succ 0$ uniformly on the parameter support), Bernstein--von-Mises gives the rate $\mathrm{Var}\!\bigl[p(\theta \mid h_t)\bigr] \sim \mathcal{I}(\theta^*)^{-1}/t$, controlling both the residual in \cref{thm:rsum} and the Jensen gap in \cref{thm:rdl-BED} at the $1/t$ scale. Strictly, the consistency assumption alone is what the framework requires; the Fisher-information regularity becomes relevant only when one wants asymptotic rates rather than asymptotic limits.

\paragraph{(v) Policy-generated actions.}
Actions are sampled from $\pi_\phi$ conditional on past observations. Because actions are not generated by the environment, conditioning on a future action does not update the posterior over $\theta$: $p(\theta \mid h_t, a_t, a_{t+1}, \ldots, a_{t'-1}) = p(\theta \mid h_t)$. This permits treating future actions as fixed inputs in the derivations, and is equivalent to the standard interventional treatment of actions in causal Markov chains.

\paragraph{(vi) Mixing given $\theta$.}
For each $\theta$ in the support of $p(\theta)$, $I(s_t; s_{t+k} \mid \theta, h_t, a_{t:t+k-1}) \to 0$ as $k \to \infty$. This rules out absorbing states and other non-ergodic dynamics under which the Markov chain never forgets $s_t$. Mixing is what makes the abductive bias of \cref{thm:rsum} (and its $\rgap$-generalized analogues) decay with the gap parameter $K$; without it, infinite-gap rewards still retain a bias from kernel-mediated coupling between $s_t$ and the deep future.

\subsection{The Prior \texorpdfstring{$p(\theta)$}{} and Meta-Learning}
\label{app:prior}

The prior $p(\theta)$ throughout this paper is the training distribution $p_{\mathrm{train}}(\theta)$ over MDPs --- $\rho$ is pretrained on trajectories from MDPs sampled from $p_{\mathrm{train}}$, and its posterior-predictive is assumed to be Bayesian \emph{under $p_{\mathrm{train}}$}. This is a meta-learning setup, and the framework's predictions extend to deployment insofar as the deployed environment lies in $\mathrm{supp}(p_{\mathrm{train}})$. We treat deployment mismatch on $\theta$ as a separate source of bias and adopt the train-prior framing throughout.

A more pointed meta-learning concern is that $\rho$'s training data --- pairs of trajectories and their corresponding posterior-predictives --- depends not only on $p_{\mathrm{train}}(\theta)$ but on the behavior of the policy that generates the training trajectories. Pretraining $\rho$ offline, separately from the deployment policy $\pi_\phi$, therefore requires the pretraining policy to cover the trajectory distribution adequately. This is mostly tractable in BED settings, where trajectories factorize across actions given $\theta$. Ensuring enough coverage of action selection during training should suffice. It becomes increasingly difficult as the domain expands, and especially in temporally rich BAMDPs where the trajectory distribution depends nonlinearly on the policy: a deployed reward-maximizing policy can easily probe regions that the pretraining policy under-sampled, leaving $\rho$ uncalibrated where it matters most. In principle, this can be alleviated with large enough training samples under enough random policies. Nevertheless, this is a practical matter to be considered.

This is a potential weakness of any offline-pretraining plus online-policy-training scheme: the predictive model is fixed during policy optimization, but policy iteration may drift outside $\rho$'s effective training support. On the other hand, the scheme opens the door to the use of massive pretrained but frozen foundation models (maybe even LLMs) to support active ICL in a variety of environments. Practical investigations of policy training in this regime, including the robustness of the method to departure from Bayesian exactness of $\rho$ (see next section), and  the question of action selection bias, is exciting future work.

Another avenue for the approach to have practical legs at scale would be that of joint training of $\pi$ and $\rho$. This could take the form of a meta-learning regime in which the predictor and the data-collection policy improve together so that $\rho$'s posterior-predictive remains accurate over the trajectories the policy actually generates. We visit this question in \cref{sec:discussion}.

\subsection{Inexactitude of \texorpdfstring{$\rho$}{}'s Bayesian Posterior Predictive}
\label{app:rho_approximation}

Assumption (ii) demands that $\rho$ be exactly Bayesian. In practice $\rho$ is a finite-capacity neural network trained by maximum likelihood on a finite dataset, so it differs from the exact Bayesian posterior-predictive $\rho^*$ in a controllable but non-zero way. This subsection discusses the consequences of inexactitude as a property of $\rho$ as a learner --- independent of any choice about how queries are conditioned on the trajectory. Issues specifically arising from masked queries during training or inference are deferred to \cref{app:masking_training}.

Writing $\Delta(Y) \coloneqq \KL(\rho^*(\cdot \mid Y) \,\|\, \rho(\cdot \mid Y))$ for the per-context KL between exact and approximate predictives at a conditioning $Y$, each reward inherits a bias that is a signed combination of such KLs. For example,
\[
  \mathbb{E}_{\rho^*}[\rsur_t \mid h_t]
    \;=\; I(s_t; \theta \mid h_t)
       \;+\; \Ent(s_t \mid h_t, \theta) \;+\; \Delta(h_t),
\]
showing that surprisal acquires an additional non-negative $\Delta(h_t)$ on top of the noisy-TV bias. More generally, rewards in $\calF$ formed from log-ratios of predictives at two conditionings $Y_+, Y_-$ pick up a differential $\bar\Delta(Y_+) - \bar\Delta(Y_-)$ rather than an absolute KL, since the two predictive errors partially cancel under log-subtraction.

Two structural features:
\begin{itemize}
  \item \textbf{$\rsur$'s bias is absolute.} A single non-negative KL $\Delta(h_t)$. Contexts where $\rho$ predicts poorly are systematically over-rewarded, on top of the aleatoric noisy-TV --- a model-error-seeking pathology specific to surprisal.
  \item \textbf{Log-ratio rewards have differential bias.} Rewards of the form $\log\rho(\cdot \mid Y_+) - \log\rho(\cdot \mid Y_-)$ inherit only the difference of $\rho$'s errors at $Y_+$ and $Y_-$. Uniform approximation error cancels; only the asymmetry between the two conditionings survives.
\end{itemize}

The differential structure of log-ratio rewards is a robustness property: an across-the-board imperfection in $\rho$ cancels in the log-ratio, leaving only the part of the error that distinguishes the two conditionings. Whether and how this differential is small in practice is a separate question --- it depends on how $\rho$ is trained and queried. \Cref{fig:gp-corr} in \cref{app:gp-worldmodel} shows the difference of log-predictive terms between an oracle observer and a trained PFN $\rho$ for the Gaussian process experiment. \Cref{fig:gp_bar} in the main text shows very little difference in reward training outcomes for the same experiment.

\subsection{Training \texorpdfstring{$\rho$}{} With vs. Without Masked Inputs}
\label{app:masking_training}

Computing $\rsum$ (and analogously the NDIGO reward of \citet{azar2019world}) requires evaluating $\rho$ on histories with one or more past states masked out. A sequence model trained with standard autoregressive next-token prediction will never have seen a mask token in its training data, so its behavior on masked queries is out-of-distribution. Two training strategies are natural:

\paragraph{Standard autoregressive training.} $\rho$ is trained on natural (unmasked) trajectories. At inference, predictives at masked conditioning sets reflect whatever extrapolation the model performs, with no guarantee of agreement with the Bayesian marginalization over the masked observation. This is the source of the differential bias described in \cref{app:rho_approximation}.

\paragraph{Masked training.} $\rho$ is trained with random masking of past observations during training, in the spirit of masked language modeling. $\rho$ then learns predictives that are well-calibrated with respect to the marginalized joint, making mask-baseline rewards potentially more reliable.

We remark that these considerations impact reward evaluation in the full BAMDP setting where parts of transition tuples $(s_{t-1},a_{t-1},s_t)$ need to be masked. However in the BED setting, masking simplifies to simply removing observations from pairs $(a_{t-1},s_t)$ and is not subject to the subtleties described above. Our experiments take place in the BED setting, but a more general implementation of ICL intrinsic rewards should consider the points above.

\section{Proofs for Theoretical Results}
\label{app:proofs}

\subsection{Interventional Factorization}
\label{app:factorization}

Several of our derivations rely on factorizing $\rho$'s joint over future observations into a product of one-step posterior-predictives. The factorization hinges on two structural features: (a) Markov dynamics given $\theta$ (assumption (i)), and (b) policy-generated actions that do not update the $\theta$-posterior (assumption (v)). The second is what makes actions \emph{interventional} in the causal sense: they enter the joint as fixed inputs, not as random variables to be modeled.

\begin{lemma}[Interventional factorization]
\label{lem:factor}
For $t_1 < t_2$ and any action sequence $a_{t_1:t_2-1}$,
\[
  \rho(s_{t_1:t_2} \mid h_{t_1}, a_{t_1:t_2-1})
  \;=\; \prod_{\tau=t_1}^{t_2}
    \rho(s_\tau \mid h_{t_1}, s_{t_1:\tau-1}, a_{t_1:\tau-1}),
\]
and the same identity holds with masked histories provided masked states lie outside $[t_1, t_2]$.
\end{lemma}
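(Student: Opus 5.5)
The identity is the chain rule of probability applied to $\rho$'s joint. The only substantive points are that actions can be treated as fixed inputs, so they contribute no factors, and that $\rho$'s conditionals are themselves Bayesian posterior-predictives. I would work with the underlying Bayesian generative model rather than with $\rho$ directly. By assumption (ii), $\rho$ on any (possibly masked) context is the corresponding conditional of the joint law $p(\theta)\prod_\tau p(s_\tau \mid s_{\tau-1}, a_{\tau-1}, \theta)$. Actions enter this law only as inputs.

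Concretely, I would first write the joint posterior-predictive over the block $s_{t_1:t_2}$ as a mixture over $\theta$:
\[
  \rho(s_{t_1:t_2} \mid h_{t_1}, a_{t_1:t_2-1}) = \int p(\theta \mid h_{t_1}, a_{t_1:t_2-1}) \prod_{\tau=t_1}^{t_2} p(s_\tau \mid s_{\tau-1}, a_{\tau-1}, \theta)\, d\theta .
\]
This uses Markovianity (i) to factor the $\theta$-conditional law of the block. By (v) the posterior weight equals $p(\theta \mid h_{t_1})$, since appending future actions does not update $\theta$. I would then apply the chain rule in $\tau$. The $\tau$-th factor is the ratio of the block joint up to $\tau$ to the block joint up to $\tau-1$, both taken with all actions $a_{t_1:t_2-1}$ fixed.

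The step that needs care is identifying each ratio with $\rho(s_\tau \mid h_{t_1}, s_{t_1:\tau-1}, a_{t_1:\tau-1})$. Two points are needed for this.

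\begin{enumerate}
\item Integrating out the later states $s_{\tau+1:t_2}$ removes their kernels, each of which integrates to one. What remains is the joint conditioned on $a_{t_1:t_2-1}$ but depending only on states up to $\tau$.
\item The surplus actions $a_{\tau:t_2-1}$ can then be dropped from the conditioning, again by (v). They do not reweight $\theta$, and no retained kernel depends on them.
\end{enumerate}

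After these two reductions, the ratio is $\int p(s_\tau \mid s_{\tau-1}, a_{\tau-1}, \theta)\, p(\theta \mid h_{t_1}, s_{t_1:\tau-1}, a_{t_1:\tau-1})\, d\theta$. This is exactly $\rho$'s one-step predictive by (ii). Multiplying the factors telescopes to the claimed product.

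For the masked version, I would repeat the argument with $h_{t_1}$ replaced by a masked history. Assumption (iii) says $\rho$ on the masked context is the Bayesian marginal in which the masked $s_t$ is integrated out. That marginalization produces a modified posterior $p(\theta \mid h_{t_1\setminus t})$, and the requirement $t \notin [t_1,t_2]$ is what makes this work.

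\begin{itemize}
\item Because the masked state lies before $t_1$, it is absorbed into the posterior weight only.
\item Because it lies outside the block, it does not appear in any kernel inside the block.
\end{itemize}

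Hence the same two steps apply verbatim, with $p(\theta \mid h_{t_1})$ replaced by $p(\theta \mid h_{t_1\setminus t})$.

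The main obstacle is a boundary case: the masked index could equal $t_1-1$, so that the kernel for $s_{t_1}$ would depend on a masked state. I would handle this by reading the statement's condition as forbidding that case, or else by treating $s_{t_1-1}$ as integrated jointly with $\theta$ inside the mixture. That preserves the product form at the level of $\rho$'s own conditionals, since each factor is defined as $\rho$'s Bayesian conditional on the same masked context.
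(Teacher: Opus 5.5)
Your proposal is correct and follows the same route as the paper's proof: Markov factorization given $\theta$ by (i), (v) to keep future actions from reweighting the posterior, marginalization over $\theta$, and (iii) for masked contexts. It also usefully spells out the chain-rule and Bayes-update step behind each factor, which the paper compresses into ``marginalizing $\theta$ on both sides,'' and your second option for the boundary case is the right one, whereas your first option should be dropped, since the statement allows a masked index $t_1-1$ and this is exactly the case used in \cref{thm:rsum} ($s_t$ masked, block $s_{t+1:T}$), where treating the identity as the chain rule for the joint law with $s_{t_1-1}$ marginalized together with $\theta$ needs nothing beyond (iii) and (v).
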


\begin{proof}
Under (i), the joint factors as $\prod_\tau p(s_\tau \mid s_{\tau-1}, a_{\tau-1}, \theta)$.  By assumption (v), conditioning on future actions does not update the posterior over $\theta$.  Marginalizing $\theta$ on both sides gives the product of posterior-predictives. Assumption (iii) extends to masked histories.
\end{proof}

This factorization is what allows $\rsum$'s telescoping representation $\rsum_t = \pmi(s_t; s_{t+1:T} \mid h_t, a_{t:T-1})$ to be used in the proof of \cref{thm:rsum}.

\subsection{Proof of \texorpdfstring{\cref{thm:impossibility}}{}}
\label{app:proof-impossibility}
\ThmImpossibility*
\paragraph{Remark.}
The argument is a marginal-indistinguishability counterexample: we construct two BAMDPs in $\mathcal{M}$ on which any $r_t \in \calF$ has the same unconditional expectation but for which the expected BIG differs. Since an unbiased per-history estimator of BIG forces matched unconditional expectations, the contradiction settles the theorem.

\paragraph{Proof.}

\paragraph{Step 1: Unconditional reduction.}
Suppose $r_t \in \calF$ satisfies the per-history identity $\mathbb{E}^M[r_t \mid h_t] = I^M(s_t; \theta \mid h_t)$ for every history $h_t$ and every $M \in \mathcal{M}$. Taking unconditional expectation,
\[
  \mathbb{E}^M[r_t] \;=\; \mathbb{E}^M\bigl[\,I^M(s_t; \theta \mid h_t)\,\bigr].
\]
We will exhibit $M_1, M_2 \in \mathcal{M}$ with $\mathbb{E}^{M_1}[r_t] = \mathbb{E}^{M_2}[r_t]$ but $\mathbb{E}^{M_1}\bigl[I(s_t; \theta \mid h_t)\bigr] \ne \mathbb{E}^{M_2}\bigl[I(s_t; \theta \mid h_t)\bigr]$, contradicting the per-history identity for at least one of them. We refer to such a pair $(M_1, M_2)$ --- or, since the kernel is fixed in our construction, the corresponding prior pair $(p_1, p_2)$ --- as a \emph{witness} to the theorem.

\paragraph{Step 2: $\mathbb{E}[r_t]$ depends only on the marginal trajectory distribution.}
Each $r_t = f\bigl((\rho(X_i \mid Y_i))_{i \in I}\bigr) \in \calF$ has finite \emph{touch horizon}
\[
  Q \;\coloneqq\; \max\bigl\{\tau : \tau \text{ is a chain index appearing in some } X_i \text{ or } Y_i\bigr\},
\]
i.e., the largest chain index referenced by the reward. Under Assumption~(ii), $\rho(X \mid Y) = P^M_{\mathrm{marg}}(X \mid Y)$ where $P^M_{\mathrm{marg}}$ is the marginal trajectory law of $M$ (with $\theta$ integrated out). Taking unconditional expectation,
\[
  \mathbb{E}^M[r_t] \;=\; \mathbb{E}_{P^M_{\mathrm{marg}}}\Bigl[ f\bigl((P^M_{\mathrm{marg}}(X_i \mid Y_i))_{i \in I}\bigr) \Bigr],
\]
which depends on $M$ only through $P^M_{\mathrm{marg}}(s_{1:Q}, a_{1:Q-1})$. In particular, two BAMDPs with identical marginal trajectory distributions on the first $Q$ steps yield identical $\mathbb{E}[r_t]$.

\paragraph{Step 3: A non-degenerate kernel family with prior-dependent BAMDPs.}
Take the BAMDP family with state space $\mathcal{S} = \{0, 1\}$, a single (dummy) action, initial state $s_1 \sim \mathrm{Bernoulli}(0.5)$, and transition kernel
\[
  p(s_{\tau+1} \ne s_\tau \mid s_\tau, \theta) \;=\; \theta,
  \qquad \theta \in (0, 0.5).
\]
The kernel is non-deterministic ($\theta > 0$) and not independent across time given $\theta$ ($\theta \ne 0.5$), so each fixed $\theta \in (0, 0.5)$ yields a kernel meeting the non-degeneracy condition of \cref{thm:impossibility}. Different priors $p(\theta)$ supported on $(0, 0.5)$ specify different BAMDPs in $\mathcal{M}$.

The marginal trajectory distribution of a sequence $s_{1:Q}$ depends only on the number of state flips $k = \#\{\tau : s_{\tau+1} \ne s_\tau\}$:
\[
  P_{\mathrm{marg}}^{p}(s_{1:Q})
  \;=\; \tfrac{1}{2} \int_0^{0.5} \theta^k (1 - \theta)^{Q-1-k}\, p(\theta)\, d\theta.
\]
The integrand is a polynomial in $\theta$ of degree at most $Q-1$. Expanding,
\[
  P_{\mathrm{marg}}^{p}(s_{1:Q})
  \;=\; \tfrac{1}{2} \sum_{j=0}^{Q-1} \beta_j(k)\, m_j(p),
\]
where $m_j(p) = \int \theta^j p(\theta)\, d\theta$ is the $j$-th moment of $p$ and the $\beta_j(k)$ are combinatorial constants. Hence $P_{\mathrm{marg}}^{p}(s_{1:Q})$ depends on $p$ only through its first $Q-1$ moments.

By the truncated Hausdorff moment problem \citep{schmudgen2017moment}, the set of probability measures on $(0, 0.5)$ matching any prescribed first $Q-1$ moments is convex with multiple distinct extreme points. Choose two distinct priors $p_1, p_2$ on $(0, 0.5)$ that match in their first $Q-1$ moments. Setting $M_1 = (\text{kernel}, p_1)$ and $M_2 = (\text{kernel}, p_2)$, both lie in $\mathcal{M}$, and by Step~2,
\[
  \mathbb{E}^{M_1}[r_t] \;=\; \mathbb{E}^{M_2}[r_t].
\]

\paragraph{Step 4: Expected BIG differs.}
Decompose
\[
  \mathbb{E}^M\bigl[I(s_t; \theta \mid h_t)\bigr]
  \;=\; \Ent^M_{\mathrm{marg}}(s_t \mid h_t)
       \;-\; \mathbb{E}_{p}\bigl[\Ent(s_t \mid s_{t-1}, \theta)\bigr].
\]
The marginal entropy $\Ent^M_{\mathrm{marg}}(s_t \mid h_t)$ is identical for $M_1$ and $M_2$ (Step~3, marginals match). The aleatoric term is
\[
  \mathbb{E}_{p}\bigl[\Ent(s_t \mid s_{t-1}, \theta)\bigr]
  \;=\; \mathbb{E}_{p}\bigl[\Ent_{\mathrm{bin}}(\theta)\bigr],
\]
where $\Ent_{\mathrm{bin}}(\theta) = -\theta\log\theta - (1-\theta)\log(1-\theta)$ is the binary entropy function. Since $\Ent_{\mathrm{bin}}$ is not a polynomial, its expectation under $p$ is not determined by any finite number of moments of $p$. By the truncated moment problem, we may choose $p_1, p_2$ matching in the first $Q-1$ moments but with $\mathbb{E}_{p_1}[\Ent_{\mathrm{bin}}(\theta)] \ne \mathbb{E}_{p_2}[\Ent_{\mathrm{bin}}(\theta)]$. Therefore $\mathbb{E}^{M_1}\bigl[I(s_t; \theta \mid h_t)\bigr] \ne \mathbb{E}^{M_2}\bigl[I(s_t; \theta \mid h_t)\bigr]$.

\paragraph{Conclusion.}
We have $\mathbb{E}^{M_1}[r_t] = \mathbb{E}^{M_2}[r_t]$ but $\mathbb{E}^{M_1}\bigl[I(s_t; \theta \mid h_t)\bigr] \ne \mathbb{E}^{M_2}\bigl[I(s_t; \theta \mid h_t)\bigr]$. The per-history identity $\mathbb{E}^M[r_t \mid h_t] = I^M(s_t; \theta \mid h_t)$ implies matched unconditional expectations, so it must fail on at least one of $M_1, M_2$. Since the construction works for any $r_t \in \calF$, the theorem follows. \qed

\subsection{Proof of \texorpdfstring{\cref{thm:rsur}}{}}
\label{app:proof-rsur}
\ThmRsur*
\begin{proof}
$\mathbb{E}[-\log\rho(s_t \mid h_t) \mid h_t] = \Ent(s_t \mid h_t)$.
By the conditioning-information identity,
$\Ent(s_t \mid h_t) = \Ent(s_t \mid h_t, \theta) + I(s_t; \theta \mid h_t)$.
Under (ii), $\rho$'s $\theta$-conditional predictive equals the true
kernel, so $\Ent(s_t \mid h_t, \theta) = \Ent(s_t \mid s_{t-1}, a_{t-1}, \theta)$.
\end{proof}

\subsection{Proof of \texorpdfstring{\cref{thm:rsum}}{}}
\label{app:proof-rsum}
\ThmRsum*
\begin{proof}
By \cref{lem:factor},
$\sum_{t'=t+1}^T \log\rho(s_{t'} \mid h_{t'}) =
\log\rho(s_{t+1:T} \mid h_t, s_t, a_{t:T-1})$ and
$\sum_{t'=t+1}^T \log\rho(s_{t'} \mid h_{t' \setminus t}) =
\log\rho(s_{t+1:T} \mid h_t, a_{t:T-1})$. Subtracting,
$\rsum_t = \pmi(s_t; s_{t+1:T} \mid h_t, a_{t:T-1})$. Take
expectation:
$\mathbb{E}[\rsum_t] = I(s_t; s_{t+1:T} \mid h_t, a_{t:T-1})$.

Applying the chain rule against $\theta$:
\begin{align*}
  I(s_t; s_{t+1:T} \mid h_t, a_{t:T-1})
  =\;& I(s_t; \theta \mid h_t, a_{t:T-1})\\
   -\;& I(s_t; \theta \mid h_t, a_{t:T-1}, s_{t+1:T})\\
   +\;& I(s_t; s_{t+1:T} \mid \theta, h_t, a_{t:T-1}) .
\end{align*}
Future actions do not update the $\theta$-posterior, so
$I(s_t; \theta \mid h_t, a_{t:T-1}) = I(s_t; \theta \mid h_t)$.
By Markov given $\theta$,
$s_t \perp s_{t+2:T} \mid s_{t+1}, \theta$, and so
$I(s_t; s_{t+1:T} \mid \theta, h_t, a_{t:T-1}) =
I(s_t; s_{t+1} \mid \theta, h_t, a_t)$
after dropping $d$-separated future actions.
\end{proof}

\begin{proof}[Vanishing residual term]
Under assumption (iv), $p(\theta \mid h_t, a_{t:T-1}, s_{t+1:T}) \to \delta_{\theta^*}$ in probability as $T \to \infty$; further conditioning on $s_t$ cannot move a delta, so the residual term vanishes.
\end{proof}

\subsection{The Log-Ratio Subclass and Its Chain-Rule Decomposition}
\label{app:log-ratio}

The proof of \cref{thm:impossibility} settles the impossibility for the entire class $\calF$ via the unconditional reduction. A complementary structural perspective is available for a specific sub-class of $\calF$ --- log-ratio rewards --- whose expected value reduces to an observation-space mutual information rather than to an entropy. This subsection isolates that sub-class and gives the chain-rule decomposition that underlies the named confound terms in $\rsum$'s decomposition (\cref{thm:rsum}) and in the $\rone(K)$ decomposition (\cref{app:rone}).

\paragraph{The log-ratio subclass $\calF_{\mathrm{LR}}$.}
For any linear combination of log likelihoods $r_t = \sum_i c_i \log\rho(X_i \mid Y_i) \in \calF$, taking expectation under $\rho^*$ gives
\[
  \mathbb{E}[r_t] \;=\; -\sum_i c_i \Ent(X_i \mid Y_i)
                \;=\; -\sum_i c_i \Ent(X_i)
                       \;+\; \sum_i c_i\, I(X_i; Y_i).
\]
The marginal-entropy contribution $\sum_i c_i \Ent(X_i)$ vanishes identically iff, for each distinct random variable $X$ appearing among the $X_i$'s, the coefficients on terms with that $X$ sum to zero. Rewards meeting this constraint can be written as a sum of \emph{matched log-ratios}
\[
  r_t \;=\; \sum_k \alpha_k \bigl[\log\rho(X_k \mid Y_k^+) - \log\rho(X_k \mid Y_k^-)\bigr].
\]
We denote this sub-class $\calF_{\mathrm{LR}}$. Membership:
\begin{itemize}
  \item $\rsum_t \in \calF_{\mathrm{LR}}$: each summand is a paired log-ratio with $X_k = s_{t'}$ for the same future state.
  \item $\rone_t(K) \in \calF_{\mathrm{LR}}$: a single paired log-ratio with $X = s_{t+K}$.
  \item $\rsur_t \notin \calF_{\mathrm{LR}}$: its single unpaired term $-\log\rho(s_t \mid h_t)$ has expectation equal to the conditional entropy $\Ent(s_t \mid h_t)$, with the marginal entropy of $s_t$ surviving.
\end{itemize}

\paragraph{Chain-rule decomposition.}
For $r_t \in \calF_{\mathrm{LR}}$, the expected reward reduces to a signed sum of observation-space conditional MIs:
\[
  \mathbb{E}[r_t] \;=\; \sum_j \alpha_j\, I(A_j; B_j \mid C_j),
\]
with $A_j, B_j, C_j$ subsets of the trajectory and $\theta$ absent from all arguments (since $\rho$ marginalizes over it). Each term decomposes via the chain rule against $\theta$:
\[
  I(A_j; B_j \mid C_j)
  \;=\; \underbrace{I(A_j; \theta \mid C_j)}_{\textnormal{signal-like}}
       \;-\; \underbrace{I(A_j; \theta \mid C_j, B_j)}_{\textnormal{residual}}
       \;+\; \underbrace{I(A_j; B_j \mid \theta, C_j)}_{\textnormal{abductive}}.
\]

\paragraph{Reading the three terms.}
\begin{itemize}
  \item Signal-like terms quantify how the observation subsets $A_j$ inform $\theta$ given the conditioning $C_j$.
  \item Residual terms measure the $\theta$-information that $A_j$ retains after $B_j$ is also observed.
  \item Abductive terms capture dependencies between observation subsets that operate through the kernel's dynamics rather than through their shared dependence on $\theta$.
\end{itemize}

This decomposition is exactly what produces the named offsets in \cref{thm:rsum} (one-step abductive and residual for $\rsum$) and in \cref{thm:rone} ($K$-step abductive and residual for $\rone(K)$). It does \emph{not} extend to rewards outside $\calF_{\mathrm{LR}}$: $\rsur$ in particular requires its own analysis via the entropy chain rule $\Ent(s_t \mid h_t) = \Ent(s_t \mid h_t, \theta) + I(s_t; \theta \mid h_t)$, yielding the BIG-plus-aleatoric form of \cref{thm:rsur}.

\paragraph{Relation to \cref{thm:impossibility}.}
For log-ratio rewards, the chain-rule decomposition gives a useful diagnostic of \emph{where} the bias relative to BIG comes from: typically a non-vanishing abductive or a non-vanishing residual. It is not, however, the engine of \cref{thm:impossibility}. The proof above (\cref{app:proof-impossibility}) operates at the level of unconditional expectations and works for the entire class $\calF$ uniformly, including non-log-ratio rewards, without invoking the chain-rule decomposition.

% \subsection{\texorpdfstring{$\rone(K)$}{} Decomposition}
\subsection{NDIGO reward Decomposition}
\label{app:rone}

The NDIGO reward of \citet{azar2019world} is the single-summand analogue of $\rsum$ at horizon $K$. We thus refer to it as $\rone(K)$, and define in the $\rho$ setting as:
\[
  \rone_t(K) \;\coloneqq\; \log\rho(s_{t+K} \mid h_t, s_t, a_{t:t+K-1})
                          - \log\rho(s_{t+K} \mid h_t, a_{t:t+K-1}).
\]

\begin{theorem}[Decomposition of $\rone(K)$]\label{thm:rone}
Under assumptions (i)--(iii) and (v),
\begin{align*}
  \mathbb{E}[\rone_t(K) \mid h_t, a_{t:t+K-1}]
  \;=\;\;& I(s_t; \theta \mid h_t)\\
       +\;& I(s_t; s_{t+K} \mid \theta, h_t, a_{t:t+K-1})\\
       -\;& I(s_t; \theta \mid h_t, a_{t:t+K-1}, s_{t+K}).
\end{align*}
\end{theorem}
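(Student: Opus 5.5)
The plan mirrors the proof of \cref{thm:rsum}, with the block $s_{t+1:T}$ replaced by the single observation $s_{t+K}$. First, I express each term of $\rone_t(K)$ as a posterior-predictive of the same variable $s_{t+K}$ under two conditionings.

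Under (ii) and (iii), the masked predictive $\rho(s_{t+K} \mid h_t, a_{t:t+K-1})$ is the Bayesian predictive. It marginalizes over $s_t$ and over the intermediate states $s_{t+1:t+K-1}$, with the actions treated as fixed inputs by (v). The unmasked predictive $\rho(s_{t+K} \mid h_t, s_t, a_{t:t+K-1})$ likewise marginalizes only the intermediate states. This is the version of \cref{lem:factor} with marginalization over unobserved intermediate states, which is legitimate since masked or unobserved states lie outside the conditioning.

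Hence
\[
  \rone_t(K) = \pmi(s_t; s_{t+K} \mid h_t, a_{t:t+K-1}).
\]
Taking the expectation over the joint law of $(s_t, s_{t+K})$ given $h_t, a_{t:t+K-1}$ gives
\[
  \mathbb{E}[\rone_t(K) \mid h_t, a_{t:t+K-1}] = I(s_t; s_{t+K} \mid h_t, a_{t:t+K-1}).
\]

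Second, I apply the chain-rule decomposition from \cref{app:log-ratio} with $A = s_t$, $B = s_{t+K}$, $C = (h_t, a_{t:t+K-1})$. Expanding $I(s_t; (s_{t+K}, \theta) \mid C)$ in the two possible orders gives
\[
  I(s_t; s_{t+K} \mid C) = I(s_t; \theta \mid C) - I(s_t; \theta \mid C, s_{t+K}) + I(s_t; s_{t+K} \mid \theta, C).
\]

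Third, I simplify the signal term using (v). Future actions do not update the $\theta$-posterior, and under the interventional reading $s_t$ also does not depend on $a_{t:t+K-1}$. Together these give $I(s_t; \theta \mid h_t, a_{t:t+K-1}) = I(s_t; \theta \mid h_t)$, exactly as in the proof of \cref{thm:rsum}. The residual and abductive terms already have the stated form, so the identity follows. Unlike \cref{thm:rsum}, no Markov reduction of the abductive term is needed, since it is already a single-observation quantity at lag $K$.

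The main obstacle is justifying this step rigorously. Conditioning on $a_{t:t+K-1}$ is harmless only if actions are treated as exogenous interventions. If the policy chooses $a_t$ as a function of $s_t$, then conditioning on $a_t$ carries information about $s_t$, and the clean identities fail.

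I will handle this the same way \cref{thm:rsum} does: read (v) interventionally, so that the joint of $(\theta, s_t, s_{t+K})$ given $h_t$ is computed under $\mathrm{do}(a_{t:t+K-1})$. Under that reading $s_t \perp a_{t:t+K-1} \mid h_t, \theta$, and $\theta \perp a_{t:t+K-1} \mid h_t$. This gives both the equality of the signal terms and the legitimacy of the marginalization in the first step. Non-negativity of the two bias terms is immediate because both are conditional mutual informations.
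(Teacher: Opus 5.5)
Your proposal is correct and follows essentially the same route as the paper. Both arguments identify $\rone_t(K)$ as $\pmi(s_t; s_{t+K} \mid h_t, a_{t:t+K-1})$ (the paper justifies this with (v) via $\rho(s_t \mid h_t, a_{t:t+K-1}) = \rho(s_t \mid h_t)$), take expectations, apply the chain rule against $\theta$, and use (v) to drop the future actions from the signal term; your interventional reading of (v) and the marginalization over $s_{t+1:t+K-1}$ only make explicit what the paper's two-line proof leaves implicit.
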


\begin{proof}
By assumption (v), $\rho(s_t \mid h_t, a_{t:t+K-1}) = \rho(s_t \mid h_t)$, hence $\rone_t(K) = \pmi(s_t; s_{t+K} \mid h_t, a_{t:t+K-1})$ (where $\pmi$ is point-wise mutual information). Take expectation and apply the chain rule against $\theta$ as in the proof of \Cref{thm:rsum}.
\end{proof}

At finite $K$ the residual $I(s_t; \theta \mid h_t, a_{t:t+K-1}, s_{t+K})$ does not vanish: a single $K$-step-ahead observation does not identify $\theta$. $\rone(K)$ is therefore doubly biased relative to $I(s_t; \theta \mid h_t)$ at finite $K$ (abductive plus persistent residual), and under BED structure reduces to $I(s_t; \theta \mid h_t)$ minus a residual.  In the limit $K \to T - t$, $\rone(K)$ approaches a single-summand analogue of $\rsum$.

\subsection{Beyond Finite Touch Horizon: The Gap Reward}
\label{app:gap-reward}

The decomposition theorems above place $\rsum$ and $\rone(K)$ at \emph{opposite} structural extremes within the log-ratio subclass:
\begin{itemize}
  \item $\rsum_t$ queries the entire future block $s_{t+1:T}$ with \emph{no gap}. Its epistemic residual vanishes as $T \to \infty$ (the block contains infinite data, identifying $\theta$), but its one-step abductive $I(s_t; s_{t+1} \mid \theta, h_t, a_t)$ persists --- the block starts immediately at $t+1$, fully coupled to $s_t$ through the Markov kernel.
  \item $\rone_t(K)$ queries a \emph{single state} $s_{t+K}$ at gap $K$. For $K \to \infty$ in a mixing chain, the abductive $I(s_t; s_{t+K} \mid \theta, h_t, a_{t:t+K-1})$ vanishes; but the epistemic residual $I(s_t; \theta \mid h_t, a_{t:t+K-1}, s_{t+K})$ persists --- a single state cannot identify $\theta$.
\end{itemize}
$\rsum$ controls the residual; $\rone$ controls the abductive. Each pushes one of two independent structural dials. This invites a unification: query a block that simultaneously starts at a gap $K$ and extends to the trajectory horizon $T$.

\begin{definition}[Gap reward]\label{def:rgap}
For $1 \le K \le T - t$,
\[
  \rgap_t(K) \;\coloneqq\; \log\rho\bigl(s_{t+K:T} \mid h_t, s_t, a_{t:T-1}\bigr)
                          - \log\rho\bigl(s_{t+K:T} \mid h_t, a_{t:T-1}\bigr).
\]
\end{definition}

The two limits of \cref{def:rgap} recover the rewards above. At $K = 1$, the conditioning block extends from $t+1$ to $T$ and \cref{lem:factor} together with assumption (iii) yields $\rgap_t(1) = \rsum_t$ exactly. Restricting the block of $\rgap_t(K)$ to its first state $s_{t+K}$ recovers $\rone_t(K)$. $\rgap$ is the two-parameter family that interpolates between them by independently choosing a gap $K$ and a block size $T - t - K + 1$.

Beyond assumptions (i)--(v), the gap-reward analysis crucially uses assumption (vi): mixing of the chain given $\theta$, ruling out absorbing states and other non-ergodic dynamics under which the chain never forgets $s_t$.

\begin{theorem}[Decomposition of $\rgap$]\label{thm:rgap-decomp}
Under (i)--(iii) and (v),
\begin{align*}
  \mathbb{E}[\rgap_t(K) \mid h_t, a_{t:T-1}]
  \;=\;\;& I(s_t; \theta \mid h_t)\\
       +\;& \underbrace{I(s_t; s_{t+K:T} \mid \theta, h_t, a_{t:T-1})}_{\textnormal{abductive at gap } K}\\
       -\;& \underbrace{I(s_t; \theta \mid h_t, a_{t:T-1}, s_{t+K:T})}_{\textnormal{residual at block end } T}.
\end{align*}
Both correction terms are non-negative.
\end{theorem}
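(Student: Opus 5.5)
The plan follows the proofs of \cref{thm:rsum} and \cref{thm:rone}, with the future block $s_{t+K:T}$ in place of $s_{t+1:T}$ or $s_{t+K}$. The first step is to write $\rgap_t(K)$ as a pointwise mutual information. By assumption (v), $\rho(s_t \mid h_t, a_{t:T-1}) = \rho(s_t \mid h_t)$, since future actions do not update the posterior over $\theta$ and hence do not change the predictive of $s_t$. Bayes' rule applied to $\rho$'s joint over $(s_t, s_{t+K:T})$ given $(h_t, a_{t:T-1})$ then gives
\[
  \rgap_t(K) \;=\; \log\frac{\rho(s_t, s_{t+K:T} \mid h_t, a_{t:T-1})}{\rho(s_t \mid h_t, a_{t:T-1})\,\rho(s_{t+K:T} \mid h_t, a_{t:T-1})} \;=\; \pmi\bigl(s_t; s_{t+K:T} \mid h_t, a_{t:T-1}\bigr).
\]
For this we need $\rho$'s predictive on the block $s_{t+K:T}$, with the intermediate states $s_{t+1:t+K-1}$ (and possibly $s_t$) marginalized out, to equal the exact Bayesian marginal of the joint law. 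We take this from assumptions (ii) and (iii), which we read as saying that $\rho$ is exactly Bayesian on masked or marginalized contexts. Where a factorized form is needed, \cref{lem:factor} provides it.

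Next we take the conditional expectation over $s_t$ and $s_{t+K:T}$ given $(h_t, a_{t:T-1})$, drawn from the true marginal law, which coincides with $\rho$'s under (ii). The expected pointwise MI is $I(s_t; s_{t+K:T} \mid h_t, a_{t:T-1})$. We then apply the chain rule against $\theta$, exactly as in \cref{app:log-ratio}. Expanding $I(s_t; (\theta, s_{t+K:T}) \mid C)$ in both orders, with $C = (h_t, a_{t:T-1})$, gives
\[
  I(s_t; s_{t+K:T} \mid C) = I(s_t;\theta \mid C) - I(s_t;\theta \mid C, s_{t+K:T}) + I(s_t; s_{t+K:T} \mid \theta, C).
\]
Assumption (v) then reduces the signal term to $I(s_t;\theta\mid h_t)$: conditional on $h_t$, the joint law of $(s_t,\theta)$ is unchanged by appending policy-chosen future actions. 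The other two terms are already the stated abductive and residual. Both are conditional mutual informations, so both are non-negative.

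The main obstacle is the first step, namely justifying that $\rho$'s block predictive with a gap is a genuine Bayesian marginal. Unlike in \cref{thm:rsum}, the intermediate states are not observed, so \cref{lem:factor} cannot be applied verbatim. I would handle this by writing both terms of $\rgap$ as integrals over $\theta$ of the kernel products with $s_{t+1:t+K-1}$ summed out, for example
\[
  \rho(s_{t+K:T}\mid h_t, s_t, a_{t:T-1}) = \int \Bigl[\textstyle\sum_{s_{t+1:t+K-1}} \prod_{\tau=t+1}^{T} p(s_\tau \mid s_{\tau-1}, a_{\tau-1}, \theta)\Bigr] p(\theta\mid h_t)\, d\theta .
\]
Assumption (v) lets us use $p(\theta \mid h_t)$ here regardless of the future actions. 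Reading (ii)--(iii) as guaranteeing these marginal predictives, the pmi identity above follows directly, and the remaining steps are routine.
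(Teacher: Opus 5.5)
Your route is the paper's: write $\rgap_t(K)$ as $\pmi(s_t; s_{t+K:T} \mid h_t, a_{t:T-1})$, take the expectation, expand by the chain rule against $\theta$, and use (v) to drop $a_{t:T-1}$ from the signal term. You are more careful than the paper on the first step: the paper simply cites \cref{lem:factor}, which does not literally cover the unobserved block $s_{t+1:t+K-1}$.

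There is one slip in your justification of that step. In the displayed integral the posterior must be $p(\theta \mid h_t, s_t)$, not $p(\theta \mid h_t)$. Equivalently, multiply the integrand by $p(s_t \mid s_{t-1}, a_{t-1}, \theta)$ and divide by $\rho(s_t \mid h_t)$. Assumption (v) only lets you ignore the future actions; it does not let you ignore $s_t$.

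As written, the two terms of $\rgap_t(K)$ would not be conditionals of a single joint, and the pmi identity would fail. In BED, for instance, both terms would reduce to $\int \prod_{\tau=t+K}^{T} p(s_\tau \mid a_{\tau-1}, \theta)\, p(\theta \mid h_t)\, d\theta$, so $\rgap_t(K)$ would vanish identically. With the corrected posterior, the rest of your argument goes through.
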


\begin{proof}
By \cref{lem:factor}, $\rgap_t(K) = \pmi(s_t; s_{t+K:T} \mid h_t, a_{t:T-1})$; by (v), $\rho(s_t \mid h_t, a_{t:T-1}) = \rho(s_t \mid h_t)$. Take expectation and apply the chain rule against $\theta$ as in the proof of \cref{thm:rsum}:
\begin{align*}
  I(s_t; s_{t+K:T} \mid h_t, a_{t:T-1})
   \;=\;&\;  I(s_t; \theta \mid h_t, a_{t:T-1})\\
   -\;&\;    I(s_t; \theta \mid h_t, a_{t:T-1}, s_{t+K:T})\\
   +\;&\;    I(s_t; s_{t+K:T} \mid \theta, h_t, a_{t:T-1}).
\end{align*}
By (v), $I(s_t; \theta \mid h_t, a_{t:T-1}) = I(s_t; \theta \mid h_t)$.
\end{proof}

\cref{thm:rgap-decomp} subsumes \cref{thm:rsum} ($K = 1$) and reveals that the abductive of $\rgap$ is a $K$-shifted analogue of the one-step abductive of $\rsum$. The decomposition immediately yields a positive counterpart to the impossibility theorem in the iterated infinite limit.

\begin{corollary}[$\rgap$ recovers BIG in the double limit]\label{cor:rgap-BIG}
Under (i)--(vi),
\[
  \lim_{K \to \infty}\;\lim_{T \to \infty}\;
    \mathbb{E}[\rgap_t(K) \mid h_t, a_{t:T-1}]
  \;=\; I(s_t; \theta \mid h_t).
\]
\end{corollary}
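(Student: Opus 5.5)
Starting from \cref{thm:rgap-decomp}, the expected gap reward equals $I(s_t;\theta\mid h_t)$ plus an abductive term minus a residual term. The first term depends on neither $K$ nor $T$, so the corollary follows once we show two things. First, for fixed $K$, the residual $I(s_t;\theta\mid h_t,a_{t:T-1},s_{t+K:T})$ tends to $0$ as $T\to\infty$. Second, the limit in $T$ of the abductive term $I(s_t;s_{t+K:T}\mid\theta,h_t,a_{t:T-1})$ exists and tends to $0$ as $K\to\infty$. The outer limit is then taken on a quantity whose residual part has already been removed.

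\emph{Residual (inner limit).} I would argue as in the vanishing-residual step of \cref{thm:rsum}. The block $s_{t+K:T}$ grows without bound as $T\to\infty$, and together with the fixed prefix it is a trajectory, so (iv) gives $p(\theta\mid h_t,a_{t:T-1},s_{t+K:T})\to\delta_{\theta^*}$. Discarding the finitely many masked states $s_{t:t+K-1}$ does not affect consistency. Once the posterior concentrates, further conditioning on $s_t$ cannot move it, so the residual goes to $0$. To make this rigorous, I would bound the residual by $\Ent(\theta\mid\cdot)$ for discrete $\theta$, or otherwise use the KL form $\mathbb{E}\,\KL(p(\theta\mid\cdot,s_t)\,\|\,p(\theta\mid\cdot))$, and pass to the limit by dominated convergence. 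The dominating function is $I(s_t;\theta\mid h_t)$ itself, since conditioning on more data can only be bounded against this in the monotone way we need.

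\emph{Abductive (outer limit).} Given $\theta$, the chain is Markov under (i), so $s_t\to s_{t+K}\to s_{t+K+1:T}$ is a Markov chain conditional on $\theta,h_t$ and the actions. The data-processing inequality then gives
\[
I(s_t;s_{t+K:T}\mid\theta,h_t,a_{t:T-1}) = I(s_t;s_{t+K}\mid\theta,h_t,a_{t:t+K-1}).
\]
This holds with equality because the rest of the block is conditionally independent of $s_t$ given $s_{t+K}$; future actions are dropped via (v), exactly as in the proof of \cref{thm:rsum}. The right-hand side does not depend on $T$, so the inner limit is trivial. By the mixing assumption (vi), it tends to $0$ as $K\to\infty$.

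\emph{Main obstacle.} The delicate step is the residual limit. Assumption (iv) only gives posterior concentration in probability, while the residual is an expectation of a KL divergence, so we need uniform integrability. For discrete or finite $\theta$ the bound by posterior entropy handles this. In general I would state the corollary as holding under whatever regularity makes the convergence in probability of the posterior imply convergence of the conditional mutual information, for example the Bernstein--von-Mises rate noted in \cref{app:assumptions}.
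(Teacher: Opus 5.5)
Your proof is correct and is essentially the paper's argument: start from \cref{thm:rgap-decomp}, send the residual to zero in the inner limit via (iv), then use the Markov funnel to replace the abductive by the $T$-independent $I(s_t;s_{t+K}\mid\theta,h_t,a_{t:t+K-1})$ and kill it with (vi). Your worry that posterior concentration in probability does not by itself give convergence of the residual is a point the paper passes over silently.

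One slip in that aside: outside BED, $I(s_t;\theta\mid h_t)$ does not bound the residual. The chain rule gives residual $= I(s_t;\theta\mid h_t) + I(s_t;s_{t+K}\mid\theta,h_t,a_{t:t+K-1}) - I(s_t;s_{t+K:T}\mid h_t,a_{t:T-1})$, which exceeds BIG whenever the abductive exceeds the last term. A bound on an expectation is also not a pointwise dominating function in any case. The same identity does, however, show that the residual is non-increasing in $T$. With continuity of mutual information along the growing block, and assuming finite BIG and abductive, its limit is $I(s_t;\theta\mid h_t,a_{t:\infty},s_{t+K:\infty})$. This vanishes once (iv) makes $\theta$ a function of the infinite block, so no dominated-convergence step is needed.
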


\begin{proof}
\emph{Inner limit ($T \to \infty$, $K$ fixed).} As $T \to \infty$ the conditioning block $s_{t+K:T}$ grows without bound. Under (iv), $p(\theta \mid h_t, a_{t:T-1}, s_{t+K:T}) \to \delta_{\theta^*}$ in probability, so the residual $I(s_t; \theta \mid h_t, a_{t:T-1}, s_{t+K:T}) \to 0$.

\emph{Outer limit ($K \to \infty$).} With the residual at zero, only the abductive remains. By (i), the dependence of $s_{t+K:T}$ on $s_t$ given $\theta$ funnels through $s_{t+K}$: $I(s_t; s_{t+K:T} \mid \theta, h_t, a_{t:T-1}) = I(s_t; s_{t+K} \mid \theta, h_t, a_{t:t+K-1})$ (data-processing along the post-$s_{t+K}$ chain). By (vi), this vanishes as $K \to \infty$.
\end{proof}

\paragraph{Remarks.}
\begin{itemize}
  \item \emph{Outside $\calF$.} The limit reward in \cref{cor:rgap-BIG} cannot be expressed as a function of finitely many likelihoods, hence not a member of $\calF$. This is consistent with \cref{thm:impossibility}: at every finite $T, K$, $\rgap_t(K) \in \calF_{\mathrm{LR}} \subset \calF$ and inherits the bias of \cref{thm:rgap-decomp}; only the asymptotic limit escapes $\calF$.
  \item \emph{Marginalization burden.} Computing $\rgap_t(K)$ requires $\rho$ to evaluate predictives on histories where $s_t$ is masked and the intermediate states $s_{t+1:t+K-1}$ are absent --- effectively asking $\rho$ to marginalize over a gap of $K-1$ unobserved states. Assumption (iii) --- that $\rho$'s response to such queries is the correctly marginalized posterior-predictive --- becomes increasingly strained as the gap grows; see \cref{app:masking_training}.
  \item \emph{Mixing failure.} Assumption (vi) fails for environments with absorbing states or other non-ergodic structure given $\theta$. In such cases the abductive does not decay with $K$ and the double limit need not yield BIG.
\end{itemize}

\subsection{Necessity boundlessly growing gaps for Log-Ratio Rewards}
\label{app:double-limit-necessity}

\cref{cor:rgap-BIG} shows that an infinite gap together with an infinite block is \emph{sufficient} to recover BIG within a log-ratio reward. We now prove the converse: the same double-limit geometry is also \emph{necessary}. Concretely, any countable linear combination of matched log-ratios that universally identifies BIG must place all of its weight at infinite gap, with each non-trivial block of infinite size.

Let $\calF_{\mathrm{LR},T}$ (cf. \cref{app:log-ratio}) be the set of estimators which are linear combinations of log ratios involving subsets of observations up to a horizon $T$, i.e.,
$$
\resizebox{0.95\textwidth}{!}{$\displaystyle
\calF_{\mathrm{LR},T}=\left\{\sum_{Z\subseteq\{t+1,\ldots,T\}} c_Z \Bigl[ \log\rho(s_{Z} \mid h_t, s_t, a_{t:T})
                                - \log\rho(s_{Z} \mid h_t, a_{t:T}) \Bigr]:c_Z\in\mathbb{R}\,,\forall Z\subseteq\{t+1,\ldots,T\}\right\}\,.$}
$$

The following theorem shows that for any sequence of estimators chosen from $\calF_{\mathrm{LR},T}$ which is asymptotically equal to the Bayesian Information Gain $I(s_t;\theta|h_t)$ as $T\to\infty$ in a universal fashion over all BAMDPs, this sequence must involve an unboundedly growing gap akin to the one discussed in \cref{cor:rgap-BIG}.

\begin{theorem}[Necessity of a growing gap]\label{thm:double-limit-necessity}
Let $\mathcal{M}$ be the class of BAMDPs satisfying (i)--(vi) whose transition kernels are non-deterministic and not independent across time given $\theta$. Suppose that $(R_{t,T})_T$ is a sequence of estimators such that
\begin{itemize}
    \item $R_{t,T}\in\calF_{\mathrm{LR},T}$ for all $T$, i.e.,
    $$R_{t,T}=\sum_{Z\subseteq\{t+1,\ldots,T\}} c_{Z,T} \Bigl[ \log\rho(s_{Z} \mid h_t, s_t, a_{t:T})
                                - \log\rho(s_{Z} \mid h_t, a_{t:T}) \Bigr]\,,$$
    \item $\lim_{T\to\infty}\mathbb{E}^M[R_{t,T} \mid h_t, a_{t:T}] = I(s_t; \theta \mid h_t)$ for every $M \in \mathcal{M}$ and every $h_t, a_{t:\infty}$, and
    \item there exist constants $C > 0$ and $B \ge 1$ such that $|W_{k,T}| \le \sum_{k_Z=k} |c_{Z,T}| \le C B^k$ for all $k$ and $T$.\footnote{It is worth noting that this is a mild assumption. In practice, numerically stable estimators must satisfy this.}
\end{itemize}
If we define for each block of observations $Z\subset\{t+1,\ldots\}$ the gap $k_Z \coloneqq \min\{k \ge 1 : t+k \in Z\}$ and write $$W_{k,T} \coloneqq \sum_{\substack{Z \subseteq\{t+1\,\ldots,T\}:\\ k_Z = k}} c_{Z,T}\,,$$ then
$$\lim_{T\to\infty}W_{k,T}=0\,.$$
In other words, as $T\to\infty$, almost all the weight $\sum_Z c_{Z,T}$ ``will become concentrated on blocks of infinite gap''.
\end{theorem}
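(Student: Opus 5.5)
The plan is to test the estimator on a one-parameter family of \emph{degenerate-prior} BAMDPs in $\mathcal{M}$. On these, BIG vanishes and the expected estimator reduces to a weighted sum of pure abductive terms, with weights exactly $W_{k,T}$. I would then show that the only way such a sum can tend to zero for a continuum of kernels is for every $W_{k,T}$ to tend to zero.

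\textbf{Step 1 (reduction to abductive terms).} I take the binary flip chain from the proof of \cref{thm:impossibility}, with $p(s_{\tau+1}\ne s_\tau\mid s_\tau,\theta)=\theta$, but with a point-mass prior $\delta_{\theta_0}$ for $\theta_0\in(0,1/2)$. This kernel is non-deterministic and not independent across time, and it is mixing, so (vi) holds. Posterior consistency (iv) is trivial. Since $\theta$ is known, $I(s_t;\theta\mid h_t)=0$, and both the signal and residual terms in the chain-rule decomposition of \cref{app:log-ratio} vanish. Each log-ratio summand therefore has expectation $I(s_t;s_Z\mid\theta_0,h_t,a)$. By the Markov property, the dependence of $s_Z$ on $s_t$ funnels through its first element $s_{t+k_Z}$, exactly as in the proof of \cref{cor:rgap-BIG}. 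This gives
\[
\mathbb{E}^{M_{\theta_0}}[R_{t,T}\mid h_t,a]=\sum_{k\ge1} W_{k,T}\,a_k(\theta_0),\qquad a_k(\theta_0)\coloneqq I(s_t;s_{t+k}\mid\theta_0,s_{t-1}),
\]
and the hypothesis forces this to tend to $0$ as $T\to\infty$ for every $\theta_0$.

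\textbf{Step 2 (analytic structure of $a_k$).} Given $s_{t-1}$, $s_t$ is Bernoulli with some bias $q(\theta_0)$. The pair $(s_t,s_{t+k})$ is then connected by a binary symmetric channel with crossover $(1-x^k)/2$, where $x=1-2\theta_0$. The mutual information $g_q(y)$ of a BSC with crossover $(1-y)/2$ is even and real-analytic in $y$, with $g_q(0)=0$ and leading coefficient $c_q>0$ on $y^2$ whenever $q\in(0,1)$. Setting $\mu=x^2$, this gives $a_k=\sum_{n\ge1}\gamma_n\,\mu^{kn}$ with $\gamma_1>0$.

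\textbf{Step 3 (extracting the limits).} The bound $|W_{k,T}|\le CB^k$ makes each sequence $(W_{k,T})_T$ bounded. A diagonal argument then gives a subsequence along which $W_{k,T}\to W_k^*$ for every $k$. For $\mu<1/B^{2}$ (i.e.\ $\theta_0$ near $1/2$), we have $|W_{k,T}a_k|\le C'(B\mu)^k$, which is summable uniformly in $T$. Dominated convergence therefore gives $\sum_k W_k^* a_k(\mu)=0$ on an interval of $\mu$. Expanding yields a convergent power series $\sum_m \mu^m\sum_{k\mid m}W_k^*\gamma_{m/k}$ that vanishes identically, so every coefficient is zero. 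Inducting on $m$: the $m=1$ coefficient is $W_1^*\gamma_1=0$, and for general $m$ the coefficient equals $W_m^*\gamma_1$ plus terms in $W_k^*$ with $k<m$, which are already zero. Hence $W_m^*=0$ for all $m$. Since every subsequential limit is zero, $\lim_T W_{k,T}=0$.

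\textbf{Main obstacle.} The delicate point is Step 2–3: $q$, and hence the $\gamma_n$, depend on $\theta_0$ itself, so $a_k$ is not literally a power series in $\mu$ with $\theta$-independent coefficients. I would first try to remove this dependence by conditioning on an $h_t$ that makes $s_t$'s predictive symmetric. If that fails, I would enlarge the family to a two-parameter kernel in which the one-step marginal bias and the persistence $\mu$ are controlled separately. Freezing the marginal then leaves analyticity in $\mu$ alone, with $\gamma_1>0$, and the coefficient-matching argument of Step 3 goes through unchanged.
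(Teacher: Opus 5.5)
Your proof has the same skeleton as the paper's. A point-mass prior kills BIG and every residual. The Markov funnel collapses each abductive term to the gap-$k_Z$ one, so only the weights $W_{k,T}$ survive. Coefficient extraction on a one-parameter analytic family, followed by strong induction, then forces $W_{k,T}\to 0$.

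The differences are in how you instantiate this:
\begin{itemize}
\item The paper uses a Gaussian AR(1) family. It extracts coefficients with Vitali--Porter plus Cauchy's integral formula, which gives $V_{n,T}\to0$ along the full sequence.
\item You use the binary flip chain and a purely real-variable argument: boundedness from $|W_{k,T}|\le CB^k$, a diagonal subsequence, Tannery/dominated convergence using $a_k=O(\mu^k)$, and the identity theorem for a real power series vanishing on an interval. You then upgrade to the full sequence via uniqueness of subsequential limits.
\end{itemize}
Your route is valid and more elementary. The bound $|W_{k,T}|\le CB^k$ plays the same role in both: it supplies a uniform region of convergence.

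The one real hole is the point you flag yourself, which your write-up leaves open. As written, Step 3 treats the $\gamma_n$ as constants, which is false for $t\ge 2$. Your first remedy cannot work in this family: for $t\ge2$, $h_t$ contains $s_{t-1}$, so given $\theta_0$ the predictive of $s_t$ is Bernoulli$(\theta_0)$ or Bernoulli$(1-\theta_0)$, never symmetric. Your second remedy is unspecified. In a two-state chain it would not decouple as you describe, because the $k$-step channel also depends on the stationary law, so ``freezing the marginal'' does not leave dependence on $\mu$ alone.

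Neither remedy is needed. Since $(2q-1)^2=(1-2\theta_0)^2=\mu$, expanding the binary entropy gives, in nats,
\[
a_k(\mu)=\sum_{n\ge1}\frac{(1-\mu^{n})\,\mu^{kn}}{2n(2n-1)}=\sum_{n\ge1}\frac{\mu^{kn}-\mu^{(k+1)n}}{2n(2n-1)} .
\]
This is a power series with $\theta$-free coefficients, and $0\le a_k(\mu)\le \mu^k/(1-\mu)$, so your domination step is unaffected. The extra terms contribute to the coefficient of $\mu^m$ only through $W_k^*$ with $(k+1)\mid m$, i.e.\ $k<m$. So that coefficient is $\tfrac12 W_m^*$ plus a combination of $W_k^*$ with $k<m$, and your induction goes through verbatim. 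The diagonal entry is $\gamma_1(0)=\tfrac12$ rather than $\gamma_1$.

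Your worry also applies to the paper. Its formula $I_k(\alpha)=-\tfrac12\log(1-\alpha^{2k})$ is the \emph{stationary} gap-$k$ information. Conditioning on $s_{t-1}\in h_t$ instead gives $\tfrac12\log\frac{1-\alpha^{2k+2}}{1-\alpha^{2k}}$. The resulting extra terms $-W_{k,T}\,x^{(k+1)m}/m$ in $F_T$ likewise only touch higher coefficients. So the $n$-th coefficient is still $W_{n,T}$ plus lower-index weights, and the paper's induction survives by the same triangularity.
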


\begin{proof}
\textbf{Step 1: Chain-Rule Decomposition.}\\
By definition, each matched log-ratio term in $\calF_{\mathrm{LR},T}$ evaluates to a pointwise mutual information. Taking the expectation of $R_{t,T}$ and applying the chain rule against the environment parameter $\theta$, we obtain:
\begin{align*}
\mathbb{E}^M[R_{t,T} \mid h_t, a_{t:T}] &= \sum_{Z} c_{Z,T} I(s_t; s_{Z} \mid h_t, a_{t:T}) \\
&= \Big(\sum_{Z} c_{Z,T}\Big) I(s_t; \theta \mid h_t) + \sum_{Z} c_{Z,T} A_Z - \sum_{Z} c_{Z,T} E_Z\,,
\end{align*}
where $A_Z \coloneqq I(s_t; s_Z \mid \theta, h_t, a_{t:T})$ is the abductive term and $E_Z \coloneqq I(s_t; \theta \mid h_t, a_{t:T}, s_Z)$ is the epistemic residual.

\textbf{Step 2: Isolating the Abductive Bias.}\\
The abductive term $A_Z$ captures dependencies generated purely by the known transition kernel, relying on the prior $p(\theta)$ only through the posterior $p(\theta \mid h_t)$. We may evaluate the limit on an environment $M \in \mathcal{M}$ whose prior is a degenerate point mass at some $\theta^*$. For such a prior, there is no uncertainty about the environment, so the true Bayesian Information Gain $I(s_t; \theta \mid h_t) = 0$ and all epistemic residuals $E_Z = 0$ identically.
The universal convergence hypothesis then reduces to:
\begin{equation}\label{eq:abductive_limit}
\lim_{T\to\infty} \sum_{Z} c_{Z,T} A_Z(\theta^*) = 0 \quad \text{for every } \theta^*\,.
\end{equation}

\textbf{Step 3: The Markov Funnel.}\\
Under assumption (i), conditional on the exact parameter $\theta^*$, the environment is Markovian. By the data-processing inequality, the dependence of any block $s_Z$ on the state $s_t$ must funnel strictly through the chronologically earliest state in that block, $s_{t+k_Z}$. Dropping the separated future actions, we have:
$$ A_Z(\theta^*) = I(s_t; s_{t+k_Z} \mid \theta^*, h_t, a_{t:t+k_Z-1}) \coloneqq I_{k_Z}(\theta^*)\,. $$
Grouping the terms in \eqref{eq:abductive_limit} by their gap $k_Z=k$, we rewrite the limit in terms of the gap weights $W_{k,T}$:
\begin{equation}\label{eq:gap_limit}
\lim_{T\to\infty} \sum_{k=1}^{T} W_{k,T} I_k(\theta^*) = 0 \quad \text{for every } \theta^*\,.
\end{equation}

\textbf{Step 4: AR(1) Instantiation and Power Series Extraction.}\\
To evaluate \eqref{eq:gap_limit}, we instantiate $M$ with the Gaussian AR(1) family parameterized by $\alpha \in (0,1)$, for which the mutual information across a gap $k$ is $I_k(\alpha) = -\frac{1}{2} \log(1-\alpha^{2k})$. Letting $x = \alpha^2 \in (0,1)$, we define the sequence of functions:
$$ F_T(x) \coloneqq \sum_{k=1}^{T} W_{k,T} \big(-\log(1-x^k)\big)\,. $$
By hypothesis, $\lim_{T\to\infty} F_T(x) = 0$ for all $x \in (0,1)$.
We extend $F_T(x)$ to the complex plane as $F_T(z)$, which is analytic on the open unit disk $|z|<1$. Expanding the principal branch of the logarithm in its Maclaurin series $-\log(1-z^k) = \sum_{m=1}^\infty \frac{z^{km}}{m}$, we express $F_T(z)$ as:
$$ F_T(z) = \sum_{k=1}^{T} W_{k,T} \sum_{m=1}^\infty \frac{z^{km}}{m} = \sum_{n=1}^\infty \frac{V_{n,T}}{n} z^n\,, $$
where we have grouped terms by $n=km$ and defined $V_{n,T} \coloneqq \sum_{d|n,\; d<T} d W_{d,T}$.

Now recall that the theorem statement assumes that there exist constants $C > 0$ and $B \ge 1$ such that $|W_{k,T}| \le \sum_{k_Z=k} |c_{Z,T}| \le C B^k$ for all $k$ and $T$. 
For $z$ strictly inside the unit disk, we may bound the logarithmic term using its Maclaurin series: $|-\log(1-z^k)| \le \sum_{m=1}^\infty |z|^{km}/m \le |z|^k / (1-|z|)$. Applying our exponential bound to the weights yields:
$$ |F_T(z)| \le \sum_{k=1}^T |W_{k,T}|\, |-\log(1-z^k)| \le \frac{C}{1-|z|} \sum_{k=1}^\infty (B|z|)^k\,. $$
This geometric series converges absolutely for $B|z| < 1$. Thus, defining a bounding radius $R \coloneqq \min(1/2, 1/B)$, the sequence of functions $\{F_T(z)\}$ is uniformly bounded by a finite constant on any closed ball $B(0, r)$ with $r < R$. 

Because $F_T(x) \to 0$ pointwise on the real interval $(0, R)$, and this interval contains an accumulation point strictly within $B(0, R)$, the Vitali-Porter Convergence Theorem \citep[Section 2.4]{schiff1993normal} guarantees that $F_T(z)$ converges uniformly to $0$ on any strictly smaller closed contour $\gamma$ enclosing the origin.
By Cauchy's Integral Formula, the $n$-th Maclaurin coefficient is given by:
$$ \frac{V_{n,T}}{n} = \frac{1}{2\pi i} \oint_\gamma \frac{F_T(z)}{z^{n+1}} \mathrm{d}z\,. $$
Since $F_T(z)$ converges uniformly to $0$ on $\gamma$, we may pass the limit inside the integral, yielding:
$$ \lim_{T\to\infty} \frac{V_{n,T}}{n} = 0 \implies \lim_{T\to\infty} V_{n,T} = 0 \quad \text{for all } n \ge 1\,. $$

\textbf{Step 5: Strong Induction.}\\
We prove $\lim_{T\to\infty} W_{k,T} = 0$ by strong induction on $k$.
For the base case $k=1$, we have $V_{1,T} = W_{1,T}$ (since $d<T$ trivially for $T>1$), and thus $\lim_{T\to\infty} W_{1,T} = 0$.
For the inductive step, assume that $\lim_{T\to\infty} W_{d,T} = 0$ for all proper divisors $d < n$. By definition, for $T>n$:
$$ V_{n,T} = n W_{n,T} + \sum_{\substack{d|n \\ d < n}} d W_{d,T}\,. $$
Taking the limit as $T \to \infty$:
$$ 0 = n \lim_{T\to\infty} W_{n,T} + \sum_{\substack{d|n \\ d < n}} d \underbrace{\Big(\lim_{T\to\infty} W_{d,T}\Big)}_{=0} \implies \lim_{T\to\infty} W_{n,T} = 0\,. $$
This holds for all integers $n \ge 1$, proving that all finite gap weights strictly vanish as $T \to \infty$. Consequently, any estimator universally approaching BIG must concentrate its weight strictly at an unbounded, infinite gap.

\end{proof}

\paragraph{Remark (relation to \cref{thm:impossibility}).}
The earlier impossibility theorem covers $\calF$ at every finite touch horizon $Q$: at finite $T$, no member of $\calF$ identifies BIG, even without log-ratio structure. \cref{thm:double-limit-necessity} extends the negative result \emph{into} the limit: even allowing countable log-ratio combinations and asymptotic $T$, the only escape is a structural double limit in which both the gap and the block grow without bound. As mentioned in the main text, this imposes severe implementation limitations since predictive models $\rho$ are well suited for long sequence modeling (large $T$) but less so for marginalization over unobserved sequence blocks (large gaps).

\subsection{Proof of \texorpdfstring{\cref{cor:rsum-BED}}{}}
\label{app:proof-rsum-BED}
\CorRsumBED*
\begin{proof}
Under BED structure, given $\theta$ and the action sequence, $s_t \sim p(\cdot \mid a_{t-1}, \theta)$ and $s_{t+1} \sim p(\cdot \mid a_t, \theta)$ are independent. Hence $I(s_t; s_{t+1} \mid \theta, h_t, a_t) = 0$, and the abductive term in \cref{thm:rsum} drops out.
\end{proof}

\subsection{Proof of \texorpdfstring{\cref{thm:rdl-BED}}{}}
\label{app:proof-rdl-BED}

\ThmRdlBED*
\begin{proof}
Under BED, $p(s_t \mid s_{t-1}, a_{t-1}, \theta) = p(s_t \mid a_{t-1}, \theta)$, so the integral in \cref{eq:rdl} simplifies:
\[
  \int p(s_t \mid a_{t-1}, \theta)\, p(\theta \mid h_t, s_t)\, d\theta
  \;=\; \mathbb{E}_{p(\theta \mid h_t, s_t)}[p(s_t \mid a_{t-1}, \theta)]
  \;=\; \rho(s_t \mid h_t)\, \mathbb{E}_{p(\theta \mid h_t, s_t)}[L(\theta)],
\]
where the last equality uses the definition $L(\theta) = p(s_t \mid a_{t-1}, \theta)/\rho(s_t \mid h_t)$. Hence
\[
  \rdl_t \;=\; \log\mathbb{E}_{p(\theta \mid h_t, s_t)}[L(\theta)].
\]
By Jensen's inequality,
\[
  \rdl_t \;=\; \mathbb{E}_{p(\theta \mid h_t, s_t)}[\log L(\theta)] + \mathcal{J}_t,
\]
with $\mathcal{J}_t \ge 0$ the Jensen gap. Bayes' rule gives $L(\theta) = p(\theta \mid h_t, s_t)/p(\theta \mid h_t)$, so $\mathbb{E}_{p(\theta \mid h_t, s_t)}[\log L(\theta)] = \KL\!\bigl(p(\theta \mid h_t, s_t) \,\|\, p(\theta \mid h_t)\bigr)$. Taking expectation over $s_t$, the KL term integrates to $I(s_t; \theta \mid h_t)$ (Bayesian-surprise identity).
\end{proof}

\paragraph{Remark.} \Cref{thm:rdl-BED} shows an important limitation of $\rdl$. While technically, $\rdl$ does recover BIG asymptotically, it does so as $t \to \infty$ which also implies that BIG $I(s_t;\theta \mid h_t)$ itself vanishes. This means that $\rdl$ biases vanish at the same time as the BIG signal itself does. In contrast, $\rsum$ also recovers BIG asymptotically, but it does so as $T \to \infty$, meaning that for long trajectories, one can hope to recover minimally biased BIG signal for finite $t$. However, the practical effect of finite-time biases may or may not be consequential for a useful policy, depending on the environment. Experiments in the main text illustrate this point.

\section{Implementing \texorpdfstring{$\rdl$}{} in General BAMDPs vs. BED}
\label{app:rdl-in-bamdbs-vs-beds}

The reward $\rdl$ defined in \cref{eq:rdl} contains an integral term
\[
  \int p(s_t \mid s_{t-1}, a_{t-1}, \theta)\, p(\theta \mid h_t, s_t)\, d\theta
\]
that pairs the kernel at the trajectory input $(s_{t-1}, a_{t-1})$ with a posterior conditioned on the subsequent observation $s_t$. We explain here why this combination is generically not implementable through $\rho$'s standard predictive interface, and how the BED restriction restores implementability via a counterfactual action commitment.

\paragraph{Why no $\rho$-implementation in general BAMDPs.}
A standard $\rho$ query $\rho(s' \mid h)$ evaluates the kernel at the most recent $(s, a)$ in the conditioning history $h$, averaged under the posterior $p(\theta \mid h)$ that this same history induces. The integrand above departs from this template in a structural way: the kernel input is $(s_{t-1}, a_{t-1})$ from time $t-1$, while the posterior is $p(\theta \mid h_t, s_t)$ from after $s_t$ has been observed. In a general MDP, where the kernel genuinely depends on the chain-position state, no rearrangement of conditioning subsets in $\rho$'s query interface produces this combination --- evaluating $\rho$ at $(s_{t-1}, a_{t-1})$ gives the posterior only up to time $t-1$, and evaluating $\rho$ after $s_t$ shifts the kernel input as well. Implementing $\rdl$ in general BAMDPs therefore requires explicit Bayesian-update machinery beyond $\rho$'s forward-pass interface.

\paragraph{BED counterfactual identity.}
In BED the transition kernel has no $s_{t-1}$ dependence, $p(s_t \mid s_{t-1}, a_{t-1}, \theta) = p(s_t \mid a_{t-1}, \theta)$, so the integrand becomes
\[
  \int p(s_t \mid a_{t-1}, \theta)\, p(\theta \mid h_t, s_t)\, d\theta
  \;=\; \rho\bigl(s_{t+1} = s_t \,\big|\, h_t, s_t,\, a_t = a_{t-1}\bigr),
\]
recovering \cref{eq:rdl-counterfactual}. The right-hand side is a standard $\rho$ query: condition on the history $(h_t, s_t)$, commit to the action $a_t = a_{t-1}$ (a counterfactual on the action choice, valid because actions are policy-generated and thus interventional), and read off the predictive probability of the next outcome equaling the observed $s_t$. This makes $\rdl$ implementable in BED at the cost of one hypothetical rollout step in $\rho$'s context.

The key takeaway: BED collapses the chain-time mismatch by removing the $s_{t-1}$ dependence in the kernel --- an obstruction that no rearrangement of $\rho$'s queries can resolve in a general MDP. It could be possible to address this issue with a modified sequence model architecture that allows multiple channels for the same sequence location, together with a gating mechanism to implement a form of counterfactual ``superposition''. This remains a non-trivial design.

\section{Experimental Details}
\label{app:experimental_details}

\subsection{Gaussian Processes} 

\subsubsection{Environment}\label{app:gp}

The Gaussian Process (GP) environment presents the agent with an unknown function $f:\mathbb{R}^2 \to \mathbb{R}$ that must be explored through sequential point queries.
At the start of each episode, a function $f$ is sampled from a Gaussian process prior $f \sim \mathcal{GP}(0, k)$, where $k$ is the \emph{rational quadratic} kernel:
\begin{equation}
k(\mathbf{x}, \mathbf{x}') = \sigma^2 \left(1 + \frac{\|\mathbf{x} - \mathbf{x}'\|^2}{2 \alpha \ell^2}\right)^{-\alpha},
\end{equation}
with $\alpha = 1$, lengthscale $\ell = 4.0$, and $\sigma^2 = 1$.

Since a GP sample is an infinite-dimensional function, we represent it in practice via \emph{finite-grid sampling}: we place $R=10$ equally spaced points along each of the $d$ input dimensions in $[-x_{\max},, x_{\max}]$ and take their Cartesian product, yielding a grid $\mathbf{X}_{grid}$ of $N = R^d$ locations. To draw a sample $\mathbf{y}_{\mathrm{grid}}$ from this multivariate Gaussian we use the standard Cholesky method. Function values at arbitrary query points 
are then obtained via GP posterior interpolation. We use $x_{\max} = 10$.

When the agent queries location $\mathbf{x}_t$, it observes:
\begin{equation}
y_t = f(\mathbf{x}_t) + \epsilon_t, \qquad \epsilon_t \sim \mathcal{N}\bigl(0, \sigma_{\text{noise}}^2(\mathbf{x}_t)\bigr),
\end{equation}
where the noise variance $\sigma_{\text{noise}}^2(\mathbf{x})$ is \emph{spatially varying} according to a fixed checkerboard noise map. $\mathbf{X}_{grid}$ is partitioned into an $8 \times 8$ grid of cells; Tiles alternate between high-noise ($\sigma_{\mathrm{noise}}^2 = 5$) and low-noise ($\sigma_{\mathrm{noise}}^2 \approx 0$) regions. This creates a heteroscedastic noise landscape that an effective exploration policy must learn to navigate.

% \paragraph{Action Space}
% The input domain $[-10, 10]^2$ is discretized into an $8 \times 8$ uniform grid of $64$ query locations. At each step $t$, the agent selects a grid index $a_t \in \{0, \ldots, 63\}$ corresponding to a fixed location $\mathbf{x}_{a_t}$ on this grid.

\subsubsection{Bayesian Oracle World Model} \label{app:gp-bayes}

The Bayesian oracle leverages the closed-form GP posterior. Given a history of observations $\{(\mathbf{x}_i, y_i)\}_{i=0}^{t-1}$, the posterior predictive distribution for a new query $\mathbf{x}_t$ is Gaussian:
\begin{equation}
p(y_t \mid \mathbf{x}_t, h_t) = \mathcal{N}\bigl(y_t;\, \mu_t, \sigma_t^2\bigr),
\end{equation}
where the posterior mean and variance are given by the standard GP regression formulae:
\begin{align}
\mu_t &= \mathbf{k}_*^\top (K + \Sigma_{\text{noise}})^{-1} \mathbf{y}_{<t}, \\
\sigma_t^2 &= k(\mathbf{x}_t, \mathbf{x}_t) - \mathbf{k}_*^\top (K + \Sigma_{\text{noise}})^{-1} \mathbf{k}_*,
\end{align}
with $\mathbf{k}_* = [k(\mathbf{x}_i, \mathbf{x}_t)]_{i=0}^{t-1}$, $K_{ij} = k(\mathbf{x}_i, \mathbf{x}_j)$, and $\Sigma_{\text{noise}} = \operatorname{diag}(\sigma_{\text{noise}}^2(\mathbf{x}_0), \ldots, \sigma_{\text{noise}}^2(\mathbf{x}_{t-1}))$. 
% The Cholesky decomposition $L = \operatorname{chol}(K + \Sigma_{\text{noise}})$ is used for numerically stable inversion. 

For the \emph{predictive} distribution used by the surprisal-based rewards, the predictive variance additionally includes the noise at the query point:
\begin{equation}
\tilde{\sigma}_t^2 = \sigma_t^2 + \sigma_{\text{noise}}^2(\mathbf{x}_t).
\end{equation}

\paragraph{Per-step Surprisal}
The observation surprisal is the negative log-likelihood under the predictive Gaussian:
\begin{equation}
-\log p(y_t \mid \mathbf{x}_t, h_t) = \frac{1}{2}\log(2\pi) + \frac{1}{2}\log \tilde{\sigma}_t^2 + \frac{(y_t - \mu_t)^2}{2\tilde{\sigma}_t^2}.
\end{equation}

\paragraph{Per-step Information Gain (Bayesian Surprise)}
The information gain reward uses the KL divergence between the GP posterior and the predictive observation distribution. In exact form:
\begin{equation}
\text{G}_t = \frac{1}{2}\log\!\left(1 + \frac{\sigma_t^2}{\sigma_{\text{noise}}^2(\mathbf{x}_t)}\right) - \frac{\sigma_t^2}{2(\sigma_t^2 + \sigma_{\text{noise}}^2(\mathbf{x}_t))} + \frac{\sigma_t^2 \cdot (y_t - \mu_t)^2}{2(\sigma_t^2 + \sigma_{\text{noise}}^2(\mathbf{x}_t))^2},
\end{equation}
where $\sigma_t^2$ is the latent GP posterior variance (without observation noise).

\subsubsection{In-Context PFN World Model} \label{app:gp-worldmodel}

\paragraph{Data-Generating Process}
The PFN is trained to predict the next observation $y_t$ in-context from the history of observed pairs. For each training example, a function $f$ is sampled from the same GP prior as the environment (rational quadratic kernel with $\alpha=1$, $\ell=4.0$, spatially-varying noise), and $T_{\text{train}} = 100$ random query locations are drawn uniformly from $[-10, 10]^2$.

\paragraph{Model Architecture}
The PFN uses a causal Transformer decoder. Each past observation $(x_j, y_j)$ is embedded by summing a linear projection of $x_j$ and a linear projection of $y_j$ into a single ``pair'' token. This pair token is interleaved with an isolated embedding of the next
query $x_{j+1}$, yielding a sequence of length $2(t{-}1)$.
The causal Transformer decoder has model dimension $d = 512$, MLP hidden dimension $F = 1024$, $N = 12$ layers, and $H = 8$ attention heads with head dimension $d/H = 64$. The model uses LayerNorm and GELU activations.  
The Transformer output at each query-token position (odd positions) is projected via a learned linear layer to produce a $2$-dimensional output $(\hat{\mu}_t, \hat{\sigma}_t)$, representing a predicted mean and log-standard-deviation.

\paragraph{Training}
The  training objective is the mean negative log-likelihood of the observed $y$ values under the predicted Gaussian, with a variance-weighted loss to encourage better calibration \citep{seitzer2022on}:
\begin{equation}
\mathcal{L} = \frac{1}{(T-1)B}\sum_{b=1}^{B}\sum_{t=1}^{T-1} \bigl[\operatorname{sg}(\hat{\sigma}_t^{2})\bigr]^{\beta} \cdot \left[\frac{1}{2}\log(2\pi\hat{\sigma}_t^2) + \frac{(y_t - \hat{\mu}_t)^2}{2\hat{\sigma}_t^2}\right],
\end{equation}
where $\operatorname{sg}(\cdot)$ denotes stop-gradient, $\hat{\sigma}_t^2 = \exp(2\hat{\sigma}_t)$ is the predicted variance, and $\beta = 1$. The model is optimized using Adam with a learning rate of $10^{-4}$, batch size $B = 128$, for $10^6$ training steps.

We collect random trajectories from Gaussian processes, and plot the correlation between the predictions of the Bayesian oracle world model and the trained PFN world model in \cref{fig:gp-corr}. The trained PFN is effective at approximating the exact likelihood and, as evidenced by other results, is effective even in settings where the oracle cannot be computed.   

\begin{figure}[ht]
    \centering
    \includegraphics[width=0.8\textwidth]{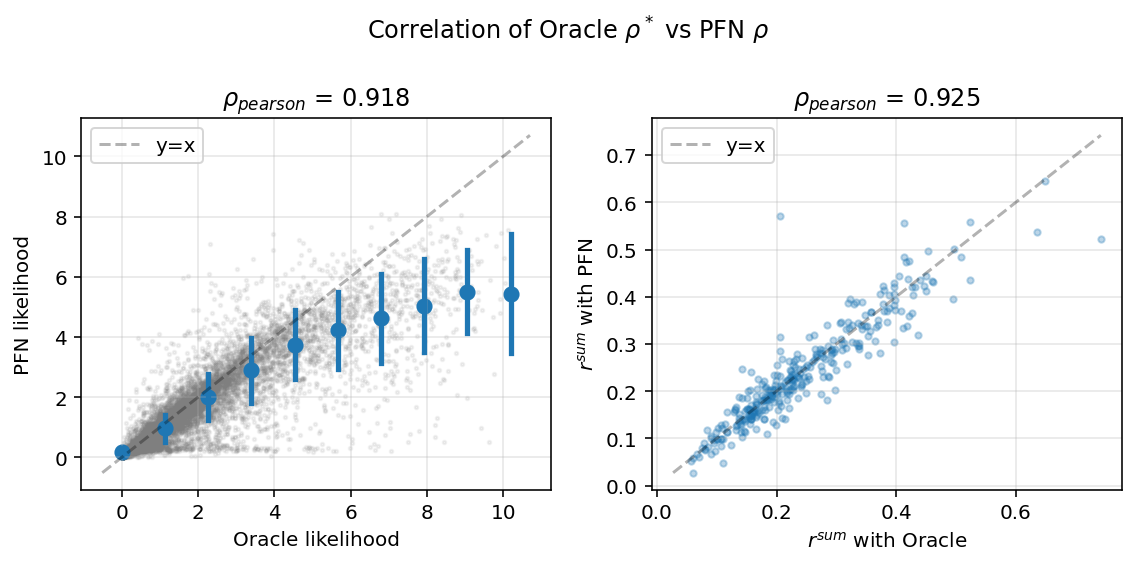}
    \caption{The correlation between the values computed by an exact bayesian oracle vs our learned PFN for likelihood and reward of randomly generated trajectories from Gaussian processes.}
    \label{fig:gp-corr}
\end{figure}

\subsubsection{Policy}

\begin{figure}[ht]
    \centering
    \includegraphics[width=0.4\textwidth]{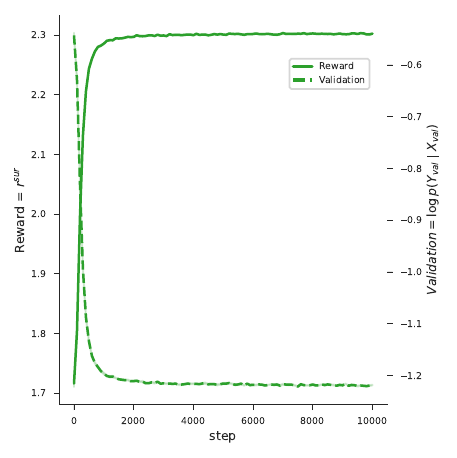}
    \caption{Training reward vs Validation log-probability over the course of training a policy for Gaussian Process. This highlights that in noisy environments, $\rsur$ can behave oppositely to the intended effect of learning the environment's dynamics.}
    \label{fig:gp-rsur-train}
\end{figure}

The policy is trained using REINFORCE with a learned value baseline. At each training step, $B = 32$ trajectories of length $T = 26$ are collected by rolling out the current policy in the environment. A single trajectory-level reward $R$ is computed from the world model, and the policy is updated with a single gradient step per batch. The REINFORCE objective combines a policy gradient loss and a value baseline loss:
\begin{equation}
\mathcal{L} = \mathcal{L}_\pi + \lambda_V \mathcal{L}_V,
\end{equation}
where $\lambda_V = 0.5$.

\paragraph{Model Architecture}
The policy is a sequence model that processes the history of observations recurrently. The backbone is a Griffin model \citep{de2024griffin}, a recurrent architecture based on linear recurrences (LRU). The Griffin configuration uses model dimension $d = 128$, MLP dimension $256$, LRU dimension $128$, $4$ attention heads, and $2$ recurrent layers.

At each step $t$, the observation $o_t = (\mathbf{x}_t, y_t)$ is encoded by concatenating $\mathbf{x}_t \in \mathbb{R}^2$ and $y_t \in \mathbb{R}$, yielding a 3-dimensional vector, which is projected to the model dimension $d = 128$ via a learned linear layer.

The sequence model output at each step is mapped to logits over the $64$ discrete grid locations via a single linear layer. 
A separate linear layer maps the sequence model output to a scalar value estimate $V(s_t) \in \mathbb{R}$.

\subsection{Decomposition of rewards}
\label{app:decomposition}

In \cref{sec:BED}, we analyzed the theoretical behaviour of curiosity rewards in BED environments. \cref{cor:rsum-BED} showed that $\rsum$ decomposes into $\text{BIG} - \text{Residual}$ terms, while \cref{thm:rdl-BED} showed that $\rdl$ decomposes into $\text{BIG} + \text{JensenGap}$ terms. In our Gaussian process environment using an Bayesian oracle observer, all terms can be computed analytically. After having trained a policy with either $\rsum$ or $\rdl$, we can therefore check whether the reward empirically decomposes into the expected terms that our theory predicts.

\begin{figure}[ht]
    \centering
    \includegraphics[width=\textwidth]{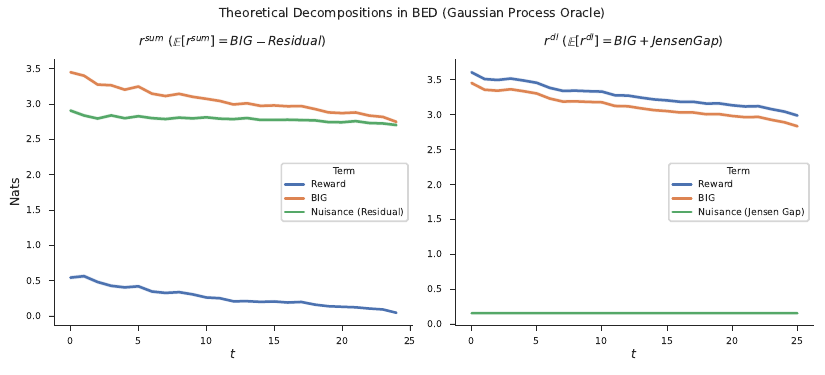}
    \caption{Empirical decompositions of (i) $\rsum$ into BIG and a subtracted residual term and (ii) $\rdl$ into an additive Jensen gap, in a Gaussian process environment with oracle in-context learner.}
    \label{fig:decomposition}
\end{figure}

\cref{fig:decomposition} plots the resulting reward, BIG, and nuisance term as a function of time, averaged across 10 training seeds and 1000 unrolled trajectories. We find that both rewards decompose \emph{exactly} into their theoretical constituents, thus validating our theoretical analyses. Indeed, plotting $\text{BIG} - \text{Residual}$ overlaps exactly with $\rsum$, and plotting $\text{BIG} + \text{JensenGap}$ overlaps exactly with $\rdl$.

\subsection{Mastermind}

At the start of each episode, the environment samples a secret code $\mathbf{c}^* \in \{0, 1, \ldots, C-1\}^L$ uniformly at random, where $C$ is the number of colours and $L$ is the code length. In our experiments we set $C = 6$ and $L = 7$, yielding $6^7 = 279{,}936$ possible codes. We hold out 10\% of the possible codes for evaluation. 
  
  At each timestep $t$, the agent submits a guess $\mathbf{g}_t \in \{0, 1, \ldots, C-1\}^L$ and receives a response $(b_t, w_t)$ where:
\begin{itemize}
  \item Black peg $b_t \in \{0, \ldots, L\}$ is the number of \emph{exact matches} (correct colour in the correct position), and
  \item White peg $w_t \in \{0, \ldots, L\}$ is the number of \emph{colour matches} (correct colour regardless of position), computed as $w_t = \sum_{c=0}^{C-1} \min\!\bigl(\text{count}(c, \mathbf{c}^*),\, \text{count}(c, \mathbf{g}_t)\bigr)$, where $\text{count}(c, \mathbf{v})$ is the number of occurrences of colour $c$ in vector $\mathbf{v}$.
\end{itemize}
Note that $w_t \geq b_t$ always holds, and colours are not double-counted: if the code contains $N$ repeats of a colour but the guess has $M > N$ entries of that colour, only $N$ contribute to $w_t$.
  
The episode proceeds for a fixed trajectory length of $T = 8$ guesses. A random initial guess is generated at the start of each episode and forms the first observation.

\subsubsection{Bayesian Oracle World Model}
\label{app:mm-bayes}

The Bayesian oracle world model maintains the exact posterior over secret codes by enumerating over all $C^L$ possible codes. Given a history of guesses and responses $\{(\mathbf{g}_1, b_1, w_1), \ldots, (\mathbf{g}_{t-1}, b_{t-1}, w_{t-1})\}$, the oracle computes the set of \emph{consistent codes}---those that would have produced the observed responses for all past guesses. Formally, a code $\mathbf{c}$ is consistent at step $t$ if for all $i < t$:    $\text{exact}(\mathbf{c}, \mathbf{g}_i) = b_i$  and $\text{colour}(\mathbf{c}, \mathbf{g}_i) = w_i$.
  
  The uniform prior over codes, combined with consistency filtering, yields the posterior probability of a newly observed response $(b_t, w_t)$ given a new guess $\mathbf{g}_t$ as:
  \begin{equation}
      p(b_t, w_t \mid h_t = [h_{t-1}, s_{t-1}, \mathbf{g}_t]) = \frac{|\{\mathbf{c} \in \mathcal{C}_t : \text{exact}(\mathbf{c}, \mathbf{g}_t) = b_t \wedge \text{colour}(\mathbf{c}, \mathbf{g}_t) = w_t\}|}{|\mathcal{C}_t|},
  \end{equation}
  where $\mathcal{C}_t$ denotes the set of codes consistent with all observations prior to step $t$.

\subsubsection{In-Context PFN World Model} \label{app:mm-worldmodel}

\paragraph{Data-Generating Process}
The PFN is trained on synthetically generated game trajectories. For each training example, a secret code is sampled uniformly at random, and then a sequence of $T_{\text{train}} = 50$ random guesses is generated. Each guess is drawn uniformly from $\{0, \ldots, C-1\}^L$, and the corresponding responses $(b_t, w_t)$ are computed by the environment.

\paragraph{Model Architecture}
The in-context world model is a causal Transformer decoder that processes interleaved guess and response tokens. 
Specifically, each guess $\mathbf{g}_t$ is one-hot encoded over the $C$ colours for each of the $L$ positions, yielding a vector of dimension $C \times L$, which is then projected to the model dimension $d$ via a linear layer. 
Each response $(b_t, w_t)$ is one-hot encoded as a concatenation of two vectors of dimension $(L+1)$ each, giving a total dimension of $2(L+1)$, which is projected to $d$ via a separate linear layer.
For a trajectory of $T$ steps, the input sequence is constructed by interleaving \emph{pairs} and \emph{isolated guesses}. A pair token is formed by summing the guess and response embeddings for the same timestep: $\mathbf{p}_t = \text{embed}_g(\mathbf{g}_t) + \text{embed}_r(b_t, w_t)$. The isolated guess token for the next step is $\text{embed}_g(\mathbf{g}_{t+1})$. These are interleaved as $[\mathbf{p}_1, \text{embed}_g(\mathbf{g}_2), \mathbf{p}_2, \text{embed}_g(\mathbf{g}_3), \ldots]$, resulting in a sequence of length $2(T - 1)$.

The sequence is processed by a causal (left-to-right) Transformer decoder with the model dimension $d = 512$, MLP hidden dimension = 1024, 12 layers, and 8 attention heads with head dimension = 64.
The Transformer output at every odd position (i.e., the isolated guess positions) is used to predict the corresponding response. A linear readout maps the Transformer output to $(L + 1)^2$ logits, representing the joint distribution over all possible $(b_t, w_t)$ pairs via a softmax. The target is the index $b_t \cdot (L + 1) + w_t$.

The in-context world model is trained to minimize the cross-entropy loss of predicting the response at each step. In each training step, a fresh batch of $B = 128$ random trajectories is generated on-the-fly (no fixed dataset).

\subsubsection{Policy}

The policy is trained using REINFORCE with a learned value baseline. At each training step, a batch of $B = 32$ trajectories of length $T = 8$ is collected by rolling out the current policy in the environment. The total loss for REINFORCE is:
\begin{equation}
  \mathcal{L} = \mathcal{L}_\pi + \lambda_V \mathcal{L}_V,
\end{equation}
where $\lambda_V = 0.5$. When using the in-context world model (PFN observer), an entropy bonus $-\lambda_H \mathcal{H}[\pi_\phi]$ with $\lambda_H = 0.01$ is added to the loss to encourage exploration. 

\paragraph{Model Architecture}

The policy model's backbone is a Griffin model with the same configuration as the one for GP. The guess $\mathbf{g}_t$ is one-hot encoded over the $C$ colours for each position, yielding a vector of dimension $C \times L$. The exact matches $b_t$ and colour matches $w_t$ are each one-hot encoded into vectors of dimension $L + 1$. These are concatenated into a single vector of dimension $C \cdot L + 2(L + 1)$ and projected to the model dimension $d$ via a linear layer.

The policy produces a value estimate and an action distribution. A linear layer maps the sequence model output at each step to a scalar value estimate $V(s_t)$. The action (a guess of $L$ colour values) is generated autoregressively, one position at a time, using a GRU cell. At each position $k \in \{1, \ldots, L\}$:
  \begin{enumerate}
      \item The GRU cell updates its hidden state using the current input (the sequence model output for $k=1$, or the embedding of the previously selected colour for $k > 1$).
      \item A linear readout produces logits over the $C$ colours for position $k$.
      \item A colour is sampled from $\text{Categorical}(\text{softmax}(\text{logits}_k / \tau))$, where $\tau$ is the temperature (set to $1.0$ during training).
      \item The selected colour is embedded and fed as input to the GRU for the next position.
  \end{enumerate}
The log-probability of the full action is the sum of the per-position log-probabilities: $\log \pi(a_t \mid s_t) = \sum_{k=1}^{L} \log \pi_k(a_t^{(k)} \mid s_t, a_t^{(1:k-1)})$.

\subsection{Alchemy}

The Alchemy environment is inspired by the latent-chemistry paradigm of \citet{wang2021alchemy}.
At the start of each episode, an environment is sampled with a hidden chemistry: a set of deterministic transition rules governing how potions transform stones. Stones are described by $D = 4$ discrete dimensions with value counts $(3, 3, 3, 4)$, yielding $3 \times 3 \times 3 \times 4 = 108$ distinct stone types. Potions are described by a single discrete dimension with $P = 6$ values. Each chemistry defines a fixed mapping from (initial\_stone, potion) pairs to resulting stones; if a given (stone, potion) pair has no matching rule, the stone is returned unchanged and the transition is marked as invalid. The full set of chemistries and their transition rules are stored as a precomputed transition table of shape $[N_{\text{envs}}, N_{\text{rules}}, 2D + P]$.

At each timestep $t$, the agent chooses an action $a_t = (i_t, p_t)$---an initial stone and a potion---and the environment returns an observation $(v_t,f_t)$ where $v_t \in \{0, 1\}$ indicates whether the transition was valid (i.e., a matching rule exists in the current chemistry), and
$f_t$ is the resulting stone (equal to $i_t$ when $v_t = 0$).

The episode proceeds for a fixed trajectory length of $T = 5$ steps. A random initial transition is sampled from the environment's transition table at the start of each episode and forms the first observation.

\subsubsection{Bayesian Oracle World Model}
Similar to Mastermind, the Bayesian oracle world model maintains the exact posterior over hidden chemistries by enumerating all $N_{\text{envs}}$ candidate environments. Given a history of observations, the oracle computes a \emph{consistency mask}: a candidate chemistry $\theta$ is consistent at step $t$ if, for all prior observations $(i_{t'}, p_{t'}, v_{t'}, f_{t'})$ with ${t'} < t$:
  \begin{enumerate}
    \item If $v_{t'} = 1$: chemistry $\theta$ contains a rule $(i_{t'}, p_{t'}) \mapsto f_{t'}$ that produces the observed output stone, and
    \item If $v_{t'} = 0$: chemistry $\theta$ has no rule matching the input pair $(i_{t'}, p_{t'})$.
  \end{enumerate}
  
The posterior probability of a new observation $s_t =(v_t, f_t)$ is computed as:
\begin{equation}
\rho(s_t \mid h_t = [h_{t-1}, s_{t-1}, (i_t, p_T)]) = \frac{\bigl|\{e \in \mathcal{E}_t : e \text{ is consistent with } (i_t, p_t, v_t, f_t)\}\bigr|}{|\mathcal{E}_t|},
\end{equation}
where $\mathcal{E}_t$ denotes the set of chemistries consistent with all observations prior to step $t$. 

This world model provides exact Bayesian inference but scales linearly in the number of candidate environments, which must be enumerated exhaustively. 

\subsubsection{In-Context PFN World Model}\label{app:alchemy_architecture}

The world model is a 12-layer causal Transformer with model dimension
$d{=}512$, 8 attention heads, and feed-forward dimension $1024$.
Each observation $o_t = (i_t, p_t, f_t)$ is split into an \emph{input token}
(encoding $i_t$ and $p_t$ via concatenated one-hot vectors, projected to
$\mathbb{R}^d$) and an \emph{output token} (encoding $f_t$ similarly).
These are interleaved into a sequence
$[\text{in}_1, \text{out}_1, \text{in}_2, \text{out}_2, \dots]$
of length $2T$ and processed with causal attention, so that the hidden state
at each input position $\text{in}_t$ attends to all tokens up to and
including $\text{out}_{t-1}$.

The output prediction head is an autoregressive GRU decoder that, starting
from the Transformer's hidden state at $\text{in}_t$, sequentially predicts
each component of the observation output: first a validity flag (2 categories),
then each of the four stone dimensions
(3, 3, 3, and 4 categories respectively).
At each step the GRU receives the embedding of the previous ground-truth
component (teacher forcing during training), updates its hidden state, and
produces logits over the current component's categories.
Invalid logit positions (where the category index exceeds the component's
vocabulary) are masked to $-\infty$ before the softmax.

The model is trained on sequences of $T_{\mathrm{train}}{=}50$ transitions
sampled with uniformly random actions from uniformly sampled environment IDs,
with batch size $B{=}64$ and the Adam optimizer at learning rate $10^{-4}$.

\paragraph{Data-Generating Process}
The PFN is trained on synthetically generated transition trajectories. For each training example, a chemistry (environment) is sampled uniformly at random, and then a sequence of $T_{\text{train}} = 20$ random transitions is generated. Each transition draws a random initial stone and potion uniformly from their respective value ranges, and the corresponding observation $(v_t, \textbf{s}_t^{\text{out}})$ is computed by the environment.

\paragraph{Model Architecture}
The in-context world model is a causal Transformer decoder that processes interleaved input (action) and output (observation) tokens. 
Each input token encodes the initial stone and potion via per-dimension one-hot vectors. The stone dimensions $(3, 3, 3, 4)$ and potion dimension $(6)$ are each one-hot encoded and concatenated, yielding a vector of dimension of 19, which is projected to the model dimension $d$ via a linear layer.
Each output token encodes the validity indicator $v_t$ and the final stone. The validity is one-hot encoded into a vector of dimension 2, and is concatenated with the final stone's one-hot into a 15-dimensional vector, which is projected to $d$ via a separate linear layer.
  
For a trajectory of $T$ steps, the input sequence is constructed by interleaving input and output embeddings. For each timestep $t$, the input embedding and output embedding are placed at positions $2t$ and $2t+1$, resulting in a sequence of length $2T$. The sequence is processed by a causal Transformer decoder with model dimension $d = 512$, MLP hidden dimension $F = 1024$, $N = 12$ layers, $H = 8$ attention heads with head dimension $d/H = 64$.

Similar to Mastermind's policy, the output prediction uses a GRU-based autoregressive decoder that conditions each output component's prediction on the previously observed ground-truth components (teacher forcing during training). 
The per-step loss is the sum of the per-component negative log-probabilities: $-\sum_{k=1}^{5} \log p(o_t^{(k)} \mid o_t^{(1:k-1)}, h_t)$. The in-context world model is trained to minimize the cross-entropy loss of predicting the output at each step. In each training step, a fresh batch of $B = 64$ random trajectories is generated on-the-fly (no fixed dataset), and the model is optimized using Adam with a learning rate of $10^{-4}$.

\subsubsection{Policy}

The policy is trained using PPO (Proximal Policy Optimization). At each training step, a batch of $B = 32$ trajectories of length $T = 5$ is collected by rolling out the current policy in the environment. The PPO objective uses $K = 4$ update epochs per batch of collected data. 
% The clipped surrogate loss is:
% \begin{equation}
% \mathcal{L}_\pi = -\frac{1}{T} \sum_{t=1}^{T} \min\!\Bigl(r_t(\theta)\, A_t,\; \text{clip}\bigl(r_t(\theta), 1 - \epsilon, 1 + \epsilon\bigr)\, A_t \Bigr),
% \end{equation}
% where $r_t(\theta) = \frac{\pi_\phi(a_t \mid s_t)}{\pi_{\phi{\text{old}}}(a_t \mid s_t)}$ is the probability ratio, $A_t = R - V_\phi(s_t)$ is the advantage (with $R$ the trajectory-level reward from the world model and $V_\phi$ the learned value baseline), and $\epsilon = 0.2$ is the clipping parameter. The value loss is:
% \begin{equation}
% \mathcal{L}_V = \frac{1}{T} \sum_{t=1}^{T} \bigl(R - V_\phi(s_t)\bigr)^2.
% \end{equation}
% The total loss combines the policy, value, and entropy terms:
% \begin{equation}
% \mathcal{L} = \mathcal{L}_\pi + \lambda_V \mathcal{L}_V - \lambda_H \mathcal{H}[\pi_\phi],
% \end{equation}
% where $\lambda_V = 0.5$ and $\lambda_H = 0.01$.

\paragraph{Model Architecture}

The policy is the same sequence model as used in Mastermind, with a Griffin backbone model \citep{de2024griffin}, and a GRU to predict the final output components autoregressively conditioned on the Griffin's output.
The log-probability of the full action is the sum of the per-component log-probabilities: $\log \pi(a_t \mid s_t) = \sum_{k=1}^{5} \log \pi_k(a_t^{(k)} \mid s_t, a_t^{(1:k-1)})$.

\end{appendices}

% \newpage
% \input{checklist.tex}

%%%%%%%%%%%%%%%%%%%%%%%%%%%%
% Examples
%%%%%%%%%%%%%%%%%%%%%%%%%%%%

% Figure:

% \begin{figure}[ht]
%     \centering
%     \includegraphics[width=\linewidth]{figures/neural_activation_space.jpg}
%     \caption{\textbf{Figure title.} \textbf{a.} Caption for first sub-figure. \textbf{b.} The caption for the second sub-figure.}
%     \label{fig:figure_label}
% \end{figure}

% Table:

% \begin{table}
%     \caption{Sample table title}
%     \centering
%     \begin{tabular}{lll}
%         \toprule
%         \multicolumn{2}{c}{Part}                   \\
%         \cmidrule(r){1-2}
%         Name     & Description     & Size ($\mu$m) \\
%         \midrule
%         Dendrite & Input terminal  & $\sim$100     \\
%         Axon     & Output terminal & $\sim$10      \\
%         Soma     & Cell body       & up to $10^6$  \\
%         \bottomrule
%     \end{tabular}
%     \label{tab:table_label}
% \end{table}

% Box:

% \begin{mybox}[label=box:box_label]{Box title}
% Box contents.
% \end{mybox}

\end{document}